\def\nb#1{}
\newtheorem{theorem}{THEOREM}
\newtheorem{lemma}[theorem]{LEMMA}
\newtheorem{remark}[theorem]{REMARK}
\newtheorem{definition}[theorem]{DEFINITION}
\newtheorem{examples}[theorem]{EXAMPLES}
\newcommand{\sfy}{\mathsf{y}}
\newcommand{\sfn}{\mathsf{n}}
\newcommand{\sfr}{\mathsf{r}}
\newcommand{\sfb}{\mathsf{b}}
\newcommand{\sfg}{\mathsf{g}}
\newcommand{\cv}[1]{#1\breve{\ }}
\newcommand{\id}{1'}
\newcommand{\cp}{\mathbin{\circ}}
\newcommand{\join}{+}
\newcommand{\comp}{\mathbin{;}}
\def\pw{\mathscr{P}}
\newcommand{\up}{\textup}
\newcommand{\Id}{\operatorname{Id}}
\def\de{\delta}
\def\ep{\varepsilon}
\def\QRA{{\sf QRA}}
\def\NA {{\sf NA}}
\def\Los{\L{o}\'{s}}
\def\i{{\sf i}}
\def\j{{\sf j}}
\def\g{{\sfg}}
\def\E{{ \mathscr{E}}}
\def\r{{\sfr}}
\def\y{{\sfy}}
\def\n{{\sfn}}
\def\p{{\sf p}}
\def\q{{\sf q}}
\def\x{{\times}}
\def\restr #1{{\restriction_{#1}}}
\def\Cm{\mathfrak{Cm}}
\def\RRA{{\sf RRA}}
\newcommand{\reals}{\mbox{\(\mathbb R\)}}
\def\set#1{{ \{ #1 \} }}
\def\c#1{{\mathcal #1}}
\markboth{\today}{\today}
 \tikzset{vertex/.style={draw, shape=circle, fill=white, inner sep=0pt, minimum size=4pt},}
\title{Algebraic foundations for qualitative calculi and networks}
\author{Robin Hirsch}\author{Marcel Jackson}\author{Tomasz Kowalski}
 \thanks{The second author was supported by ARC Future Fellowship FT120100666
   and Discovery Project DP1094578.  The third author was supported by ARC
   Future Fellowship FT100100952.}
\begin{document}
\maketitle
 
\begin{abstract} 
Binary Constraint Problems have traditionally been considered as Network
Satisfaction Problems over some relation algebra.  A constraint network is
satisfiable if its nodes can be mapped into some representation of the relation
algebra in such a way that the constraints are preserved.  A qualitative
representation $\phi$ is like an ordinary representation, but instead of
requiring that $(a\comp b)^\phi$ is the composition $a^\phi\cp b^\phi$ of  the relations $a^\phi$ and  $b^\phi$, as we do for ordinary
representations, we only require that $c^\phi\supseteq a^\phi\cp b^\phi \iff c\geq
a\comp b$, for each $c$ in the algebra.  A constraint network is qualitatively
satisfiable if its nodes can be mapped to elements of a qualitative
representation, preserving the constraints.  If a constraint network is
satisfiable then it is clearly qualitatively satisfiable, but the converse can
fail, as we show.  However, for a wide range of relation algebras including the point
algebra, the Allen Interval Algebra, RCC8 and many others, a network  is
satisfiable if and only if it is qualitatively satisfiable. 

Unlike ordinary composition, the weak composition arising from qualitative representations  need not be associative, so we can
generalise by considering network satisfaction problems over non-associative
algebras.  We prove that computationally, qualitative representations have many
advantages over ordinary representations:  whereas many finite relation algebras
have only infinite representations, every finite qualitatively representable
algebra has a finite qualitative representation; the representability problem
for (the atom structures of) finite non-associative algebras is {\bf
  NP-complete}; the network satisfaction problem over a finite qualitatively
representable algebra is always in {\bf NP}; the validity of equations over
qualitative representations is {\bf co-NP-complete}.  On the
other hand we prove that there is no finite axiomatisation of the class of
qualitatively representable algebras. 

\end{abstract}
\nb{R2}
\section{Introduction}
Computer scientists have been solving systems of binary constraints for a long
time.  Temporal reasoning, for example, is often dealt with by solving a set
of temporal constraints between events, represented in 
a \emph{network}: a finite\nb{R3} complete graph
whose edges are labelled by a choice of alternative temporal relations. The
network is \emph{satisfiable} if it is possible to map the nodes to temporal events
in such a way that each pair of nodes is mapped to a pair of events satisfying one of the
alternative temporal relations labelling that edge.   An algebra of these relations, in one of the simplest
cases, is the \emph{point algebra}, where the primitive
alternative relations are $=, <, >$ and the events are points on a linear flow
of time. Relational compositions of
these basic relations are recorded in the table
in the upper left corner of Figure~\ref{fig}, where $\cp$ denotes  composition of binary relations.

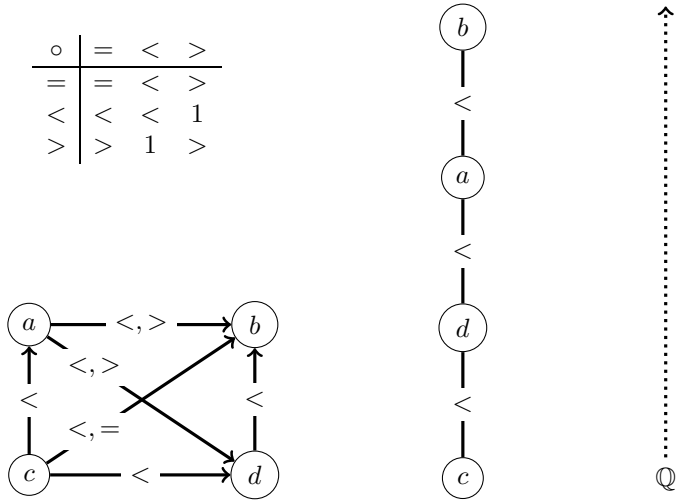
\begin{figure}
\begin{tikzpicture}
\node at (1.3,5) (v5) {\begin{tabular}{c|ccc}
$\cp$ & $=$ & $<$ & $>$ \\
\hline 
$=$ & $=$ & $<$ & $>$ \\
$<$ & $<$ & $<$ & $1$ \\
$>$ & $>$ & $1$ & $>$ \\
\end{tabular}};
\path (0,0) node[draw,shape=circle] (v0) {$c$} 
-- (0,2) node[draw,shape=circle] (v1) {$a$} 
-- (3,2) node[draw,shape=circle] (v2) {$b$} 
-- (3,0) node[draw,shape=circle] (v3) {$d$};
\draw[->,very thick] (v0) -- node[fill=white] {$<$} (v1);
\draw[->,very thick] (v1) -- node[fill=white] {$<,>$} (v2);
\draw[->,very thick] (v0) -- node[near start, fill=white] {$<,=$} (v2);
\draw[->,very thick] (v0) -- node[fill=white] {$<$} (v3);
\draw[->,very thick] (v3) -- node[fill=white] {$<$} (v2);
\draw[->,very thick] (v1) -- node[near start,fill=white] {$<,>$} (v3);
\end{tikzpicture}
\hskip2cm
\begin{tikzpicture}
\path (0,0) node[draw,shape=circle] (v0) {$c$} 
-- (0,2) node[draw,shape=circle] (v1) {$d$} 
-- (0,4) node[draw,shape=circle] (v2) {$a$} 
-- (0,6) node[draw,shape=circle] (v3) {$b$};
\draw[very thick] (v0) -- node[fill=white] {$<$} 
(v1) -- node[fill=white] {$<$} (v2) -- node[fill=white] {$<$} (v3);
\end{tikzpicture}
\hskip2cm
\begin{tikzpicture}
\draw[->,dotted, very thick] (0,0) node[anchor=north] {$\mathbb{Q}$} -- (0,6);
\end{tikzpicture}
\caption{The point algebra: composition table, a constraint network, a solution,
a strong representation.  Here $1$ denotes $\set{=, <, >}$ (all three relations are permitted).
\label{fig}}
\end{figure}

Consider the network over the point algebra given in  Figure~\ref{fig}.  
It is satisfiable in a linear order of just four distinct points, but a
\emph{representation} of the point algebra has to be infinite, because ${<} \mathbin{\cp} {<}$ is identical to $<$, 
which entails not
just transitivity (${<} \mathbin{\cp} {<}$ is contained in $<$) but also density ($<$ is
contained in ${<} \mathbin{\cp} {<}$).  This discrepancy between an infinitely
\emph{representable} algebra and finitely \emph{satisfiable} 
networks over it, is not too serious in this case because every
finite linear order embeds into the rational numbers. Hence, if a  network has a
solution in some linear order then it can be embedded into a representation of the
point algebra. 

To deal with temporal intervals rather than point-events, 
the Allen Interval Algebra \cite{All83} is very commonly used. 
Here, we have thirteen alternative primitive relations between
intervals on a linear flow of time. For the Allen Interval Algebra, a solution to a 
constraint network would be a finite set of intervals in a linear order with an
appropriate relation holding between each pair, but a 
representation of this algebra is again infinite: 
it consists of ordered pairs taken from a dense
linear order without endpoints \cite{LaMa94}.    
And again, there is no real discrepancy here because every finite arrangement of
intervals in a  linear order embeds into a set of intervals of rational numbers.

But when one tries to generalise the above examples to apply relational
reasoning in other domains, the discrepancy becomes a real issue.
A very clear example of this occurs in \emph{spatial reasoning}, where an
analogue of the Allen Interval Algebra with relations between spatial regions is
used. This algebra is called RCC8. One of the basic
relations considered in RCC8 is \emph{external connectedness}  
($EC$), whose intended interpretation is that $x{EC}y$ if regions $x$ and $y$ 
touch at the borders but only at the borders, for example as in the left-hand side of 
Figure~\ref{regions}.
Now, RCC8 requires that $EC\cp EC \supseteq EC$, which is reasonable if we think
of regions topologically as open balls (open disks in $\mathbb{R}^2$; more
generally, open sets with boundaries of genus 0), 
because then for any $xECy$ we can find a $z$
with $xECzECy$, as in the left-hand side of Figure~\ref{regions}. 
However, in real-life applications, this assumption is not always warranted.
For it happens that one region can be completely surrounded by another. This is
the spatial relation San Marino bears to Italy,  the Vatican City to Rome, and
Lesotho to South Africa. When this happens, as in the right-hand side
of Figure~\ref{regions}, where region $y$ is the annulus surrounding region $x$,  
we have $xECy$, but $(x,y)\notin EC\cp EC$.
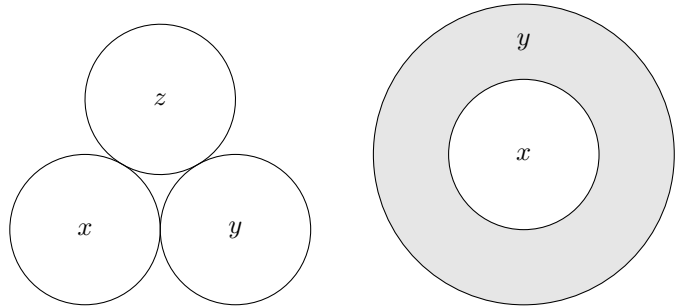
\begin{figure}
\begin{tikzpicture}
\draw (0,0) node {$x$} circle (1cm);
\draw (2,0) node {$y$} circle (1cm);
\draw[rotate=60] (2,0) node {$z$} circle (1cm);
\end{tikzpicture}
\qquad
\begin{tikzpicture}
\draw[fill=gray!20] (0,0) circle (2cm);
\draw[fill=white] (0,0) node {$x$} circle (1cm);
\node at (0,1.5) {$y$};
\end{tikzpicture}
\caption{\label{regions}Some relations between spatial regions}
\end{figure}

The problem was identified, and rightly diagnosed to be an anomaly. The remedy
was to consider an algebra of binary relations  where  relational composition is
replaced by another binary operation, called 
\emph{weak composition}, approximating real composition from above 
(see, for example~\cite{LR04}).   Weak composition is defined in such a way that
when $(x, y)$ is in  the weak composition of $EC$ with $EC$ it is not
\emph{mandatory} that there is a $z$ with $xECzECy$, it is merely
\emph{permitted} that such a $z$ should exist.  (Note, below we distinguish two different meanings of weak composition from the literature: either it is merely permitted that such a $z$ should exist, or it is permitted and additionally there must be $x', y', z'$ such that $x'ECz'ECy'$ i.e. the composition has to be realised at least once. The two corresponding types of representations we consider are \emph{feeble} and
\emph{qualitative} respectively, see below.)
Since then, an impressive body
of research has been conducted in qualitative reasoning based on these notions of
weak composition (see \cite{KER:9483409} for a survey).  

One restriction to this framework of qualitative reasoning is  that the
identity relation is assumed to be an indivisible primitive relation (an
\emph{atom} of the algebra), so that unary properties of states  cannot be
expressed directly.   So, for example, suppose we want to assert that a certain
time interval occurs \emph{during} an interval where the printer is working.
In the Allen Interval Algebra we can express that one interval $\i$ occurs
\emph{during} another interval $\j$ but we cannot assert properties of the
interval $\j$.  In our framework we may introduce a subidentity atom $\mathsf{w}$ with
intended semantics $\j\mathsf{w} \j$ if and only if the printer is working on the
interval $\j$.  A similar extension was required for \emph{Kleene Algebra},
where it was necessary to introduce the \emph{test} operator in order to
express properties of states, rather than relations \cite{Koz97}. Here, we make no
assumption that the identity is atomic.

A further restriction  is that the definition of weak composition only applies in the 
setting of finitely many primitive relations.    
However, there are applications in Artificial Intelligence where infinitely many different relations are needed.  A number of different researchers who wished 
to add quantitative reasoning to qualitative constraint systems, adopted languages containing infinitely many constraints (e.g. \cite{KL91,Mei96,Hir96} or see \cite{OW08} for a survey of metric temporal logic, also see Example~\ref{examples}.\ref{ex:final} below).\nb{R1}

In this article we define the weak composition of two binary relations in general;
our definition coincides with the original one where it applies, but covers a wider range of algebras of binary relations.
We also define a corresponding notion of qualitative representation.  In a
classical representation of a relation algebra (referred to henceforth as a \emph{strong representation}), given two points $x, y$ for which
it is \emph{consistent} for there to be a $z$ with $(x, z)\in a,\; (z,
y)\in b$ it is then \emph{mandatory} that such a point $z$ exists.  This requirement is
relaxed in a qualitative representation, see Definition~\ref{qualrep} below.  A constraint network is qualitatively
satisfiable if 
it embeds into a qualitative representation.  We will see that this corresponds much
more closely to the intuitive approach to binary constraint
problems, such as those illustrated in Figure~\ref{fig}. There, the
four element chain is 
in fact a qualitative representation of the point algebra.

\subsection {Fixed vs. arbitrary representations} \label{sec:1.1} When considering the constraint satisfaction problem in its general setting one 
typically has a fixed domain for each variable and the interpretation of any relation symbols used in the constraints is also fixed. \nb{R1, R3}  
In keeping with that, a great deal of the research into qualitative reasoning focusses on a fixed set of relations on a fixed base set and considers
the satisfiability of constraints in that setting, in other words the issue is
the satisfiability of constraints in a single, fixed representation.  To
illustrate the value of this approach, suppose we wish to schedule a series of
meetings in a discrete flow of time where there are exactly four time points, as
in Figure~\ref{fig}.  A set of binary constraints on the scheduling of events
could be represented as a network over the point algebra, but the question to
consider is not whether the network is satisfiable in \emph{some} qualitative
representation, it is whether the network can be satisfied in the qualitative
representation of the point algebra consisting of a linear order of four
points.  For this kind of problem it is the satisfiability of a network in a
fixed representation that should be considered.  (The complexity of this problem
can be fairly high, for example in  \cite{Lee14} a single model based on
\emph{dipoles} in the real plane, (i.e. elements of  $\reals^4$) is adopted and
it is shown that the network satisfaction problem is  $\exists\reals$-complete
for various algebras of relative directional constraints over this model.  For a
more extreme case, consider a graph algebra with three primitive constraints:
equals, adjacent and non-adjacent.  Let $S$ be an undecidable set of finite
connected graphs  and let $G$ be the disjoint union of all graphs in $S$. Given
any finite connected graph $F$ we can define a network on the same set of nodes,
edges are labelled `adjacent' and irreflexive non-edges are labelled
`non-adjacent'.  Since this network is qualitatively satisfiable in $G$ if and only if $F\in
S$, the problem of determining whether a network is satisfiable in $G$ is
undecidable.)

 However, there are applications where the representation is not fixed, for example in spatial reasoning using RCC8 the exact topography of the relations between regions may not be known.  Many of the theorems proved in the  previously cited papers establish properties that hold over a whole class of representations, e.g. results in \cite{Ren02} refer to the solvability of RCC8 constraints over non-empty, regular, closed regions, regardless of the particular topology under consideration.    The main decision problem considered in the current paper is to decide if a given network is satisfiable in an arbitrary qualitative representation.

Results\nb{R1, R2} obtained in this paper indicate that qualitative
representations have computational advantages over strong representations.  All
the algebras of relations mentioned above have strong representations, but only
on infinite base sets.  In contrast, we show that if a finite algebra of
relations (formally, a non-associative algebra) has a qualitative representation
then it has one on a finite base set.  A consequence is that the problem of
determining whether a finite non-associative algebra is  qualitatively
representable is in {\bf NP} (indeed it is {\bf NP-complete}) 
whereas the strong representation problem is known to be undecidable \cite{HH7}.

Furthermore, although it happens to be the case that a consistent, atomic network of constraints (a consistent network with only a single primitive relation on each edge) is always satisfiable for the algebras of relations previously mentioned, and hence a polynomial time, non-deterministic algorithm can solve the network satisfaction problem for these algebras by guessing a primitive label for each edge and then checking their consistency,  this does not work for other algebras of spatial relations.  For example,  consider\nb{R3} the \emph{interval and duration} algebra INDU whose twenty-five  primitive relations are  similar to Allen's thirteen primitive interval relations, but also determine whether the duration of the first interval is smaller, equal or greater in duration than the second interval \cite{PKS99}.  There are known, consistent, atomic INDU-networks which cannot be satisfied by intervals \cite[Figure~8.11]{Lig11}.  Thus consistency of a network does not suffice to prove that the network is satisfiable, even if the network is atomic.  Although the network satisfaction problem remains in {\bf NP} for INDU-networks, there are known relation algebras where the problem has much worse complexity \cite{Hir:undec}. 
 On the other hand, for any finite algebra of relations the network satisfaction problem over qualitative representations always belongs to {\bf NP}.  

Similarly, although the validity of equations valid over strong representations was shown to be undecidable by Tarski, the validity of equations valid over all qualitative representations is decidable (indeed it is {\bf co-NP-complete}).  Our conclusion is that qualitative representations are not only more appropriate to express the kind of contraints that arise from many applications, but they are more amenable to algorithmic reasoning.

\subsection{Historical remarks}
The structures we called algebras above, 
were conceived as \emph{calculi}: formal rules for manipulating
relations, invented and developed \emph{ad hoc}, to suit the
purpose at hand. This is evident in the naming: for example, RCC8 is so called
because it was originally developed in~\cite{RCC8-92} as 
\emph{Region Connection Calculus}, with 8 basic relations, hence the 
acronym RCC8 (although the names of the three authors might also have something
to do with it). Later, mathematicians observed that such calculi, including 
the point algebra, Allen Interval  Algebra and
RCC8, were examples of Tarski's Relation Algebras. As far as we
know, this observation was first made in~\cite{LaMa88,LaMa94}.  In this setting, the basic relations are boolean
atoms in a relation algebra,  the edges of a network are labelled by arbitrary
elements of the relation algebra and the network is satisfiable if its nodes can
be mapped into some representation of the relation algebra in such a way that
the label of an edge of the network holds at the corresponding two points in the
representation.    However, as we outlined above, there were difficulties in
restricting to strong representations, particularly for relation algebras such
as RCC8, and this led to the weaker notion of qualitative representation, now
very widely studied in knowledge representation and its applications, see
\cite{DWM01,LR04,LW06,MSW06,KER:9483409,RL05}, for example.

\subsection{Notation}
We deal with abstract algebras and concrete representations as binary relations and separate the notation, to some extent.   Working abstractly we use $+, -$ as the basic boolean operators and introduce standard abbreviations, such as $x\cdot y = -(-x+-y)$ and $x\leq y\leftrightarrow x+y=y$.  
The identity constant is $1'$, the converse operator is $\cv{}$ and any algebraic multiplication-like operator (including weak composition, below) will be denoted by $;$.  Working with concrete binary relations we write $\cup, \setminus$ for the operators corresponding to $+, -$ and we write $\Id_D=\set{(x,x):x\in D}$ for the identity relation over a domain $D$, corresponding to the abstract $1'$, though we may drop the subscript $D$ if it is clear from the context.  
The converse of a binary relation $r$ will be written as $\cv{r}=\set{(y, x):
  (x, y)\in r}$.  We write $r\cp s =\set{(x, y)\in D\times D: \exists z\in D\;
  (x, z)\in r\wedge (z, y)\in s}$ for the composition of two binary relations
$r, s$.  Our convention is that converse has highest precedence, followed by
composition which takes precedence over other operators, for example
$a\cdot b\comp \cv{c}$ denotes $a\cdot(b\comp (\cv{c}))$.\nb{R3}  For any set $S$ we write $\wp(S)$ for the power set of $S$.\nb{R2}

\section{Background}\label{sec:bckgrnd}

A \emph{qualitative calculus} has traditionally been 
 defined (see, for example,~\cite{LR04}) by specifying a finite 
partition $\Pi = (R_0,\dots,R_n)$
of the set $D\times D$, for some fixed (usually infinite) 
domain $D$, with the following properties:
\begin{enumerate}
\item The identity relation $\Id_D$ is an element of the partition, 
\item $\Pi$ is closed under relational converses, that is, $\cv{R}\in \Pi$
  for every $R\in \Pi$. 
\end{enumerate}
The set $\{R_0,\dots,R_n\}$ generates a boolean subalgebra $\mathcal{B}$ of 
$\pw(D\times D)$ under the usual set-theoretical operations. Clearly,
$R_0,\dots,R_n$ are atoms of $\mathcal{B}$; these include the identity
relation. Moreover, $\mathcal{B}$ is closed under relational converses.
However,~$\mathcal{B}$ is not in general closed under relational composition, 
as the example of Figure~\ref{regions} indicates. This is remedied by
considering \emph{weak composition} instead: an operation defined by 
$$
S\comp T = \bigcup\{R\in \Pi\colon R\cap (S\cp T)\neq \varnothing\}
$$ 
where $S\cp T$ stands for the true composition of $S$ and $T$. So defined,
$S\comp T$ is the smallest element of $\mathcal{B}$ containing $S\cp T$.

Thus, a qualitative calculus carries a natural algebraic structure of the type
of a relation algebra.  Viewed from an
abstract algebraic perspective, a qualitative calculus is a hybrid
object: an abstract algebra together with a concrete interpretation, or
a \emph{representation}. As in \cite{WestphalHW14}, one of our aims in this article is to  
separate the two sides of a qualitative calculus, into \emph{syntax} (algebra)
and \emph{semantics} (representation), and so investigate the foundations of
qualitative calculi in a manner similar to  
model theoretical analysis of classical mathematics.

Before we state the basic definitions, let us  recall the two generalisations
that we adopt from the outset. Firstly, we will lift the finiteness assumption.
It is not necessary for a definition of weak composition, and from a universal algebraic
point of view admitting infinite algebras is more natural, furthermore, as we have seen, there are applications where it is desirable to include infinite relation algebras, for example when we wish to express \emph{metric constraints}.\nb{R1, R2} Secondly, we do not require that the identity is an atom, as subidentity relations provide a natural way of modelling
\emph{properties}, that is, subsets of the domain, by representing
 a set $Z\subseteq D$ by the relation $\{(z,z)\colon z\in Z\}$.   

\begin{definition}\label{def:parcel}
Let $D$ be a set and let $\c S$ be a set of binary relations over $D$, that is, $\c
S\subseteq\pw(D\times D)$.  $\c S$ is a \emph{herd} if \nb{R1}
\begin{enumerate}
\item \label{en:top} $\c S$ forms a boolean set algebra with top element $D\times D$, so $\c S$
  is closed under finite intersections and complement relative to $D\times D$,
\item\label{en:id} $\Id_D\in \c S$,
\item\label{en:conv}  If $A\in \c S$ then the converse relation $\cv{A}$ is in $\c S$.
\end{enumerate}
\end{definition}
 In a herd $\c S$ given any two elements $A, B\in \c S$ if there is a minimal $C\in\c S$ containing $A\circ B$ then we say that the \emph{weak composition} of $A$ and $B$ is $C$.  If $\c S$ is finite, then such a minimal element is sure to exist, since $\c S$ is closed under finite intersections.

For herds with infinitely many relations, the weak composition of two elements is not always defined (a
minimal element containing $A\circ B$ may not exist), however the case we are
interested in is the case where the weak composition of $A$ and $B$ is defined,
and for the abstract algebraic structure corresponding to herds we include a
binary composition operator $;$, so the signature is the same as  that of a
relation algebra: it is a boolean algebra with an extra nullary operation
$\id$ for identity, a
unary operation $\cv{}$ for converse, and a binary operation $\comp$ to denote
weak composition and it obeys all the axioms defining a relation algebra except
perhaps associativity (see below). Maddux calls such an algebra a
\emph{non-associative relation algebra} \cite[Definition~1.2]{Ma82}, or
non-associative algebra for short.  \cite{LR04} already observed that  their
qualitative calculi are non-associative algebras and it is easily verified that
the herds  considered here are non-associative algebras too, in the cases where
weak composition is defined. 

An algebra $\c A=(A, 0, 1, +, -, 1', \cv{ }, \comp)$ of the type
of relation algebras belongs to the variety   
  $\mathsf{NA}$ of \emph{non-associative algebras}, if 
\begin{enumerate} 
\item $(A, 0, 1, +, -)$ is a boolean algebra, 
\item $(A, 1', \cv{ }, \comp)$ is an involuted monoid, i.e. it satisfies
\begin{enumerate}
\item $1'\comp x = x = x\comp 1'$
\item $\cv{\cv{x}} = x$
\item $\cv{(x\comp y)} = \cv{y}\comp\cv{x}$
\end{enumerate}
\item $\cv{ }$ and $\comp$ are normal additive operators, that is
\begin{enumerate}
\item $\cv 0=  x\comp 0 =0$,
\item $\cv{(x+y)} = \cv{x}+\cv{y},\; x\comp(y+z) = (x\comp y)+(x\comp z)$
\end{enumerate}
\item $x\comp y\cdot\cv{z}=0$ if and only if $y\comp z\cdot \cv{x}=0$ (Peircean law)
\end{enumerate}
  By additivity, the operators are monotone, e.g. $y\leq z\rightarrow x;y\leq
  x;z$, etc. Since the operators $\cv{}, ;$ are \emph{conjugated} it turns out
  that every non-associative algebra is \emph{completely additive}, i.e. if $S$
  is a subset of the elements of $\c A$ with a supremum $\Sigma S$ then
  $\cv{(\Sigma S)}$ is the supremum of $\set{\cv{s}:s\in S}$ and for any $a\in\c
  A$ the element $a;\Sigma S$ is the supremum of $\set{a;s:s\in S}$
  \cite[Theorem 1.14]{JT51}. 

In the following\nb{R3}  we define a qualitative representation as an
\emph{isomorphism} from an non-associative algebra to a herd.
In \cite[Definition~3]{LR04} the corresponding definition is only required to be
a homomorphism, injectivity is not required.   One difficulty with this weaker notion of representation is that trivial map from an arbitrary non-associative algebra to the herd on an empty domain is a non-injective homomorphism, and we  wish to exclude this trivial
case. 
\begin{definition}\label{qualrep}
Let $\c A=(A, 0, 1, +, -, 1', \cv{ }, \comp )$ be a  non-associative algebra.
A \emph{qualitative representation} $\phi$ of an algebra $\c A$ is an injection
to a herd $\c S$ of binary relations over base $D$, such that 
\begin{enumerate}
\item\label{en:total} $0^\phi=\varnothing, \; 1^\phi=D\times D,\; (1')^\phi={\Id}_D$,
\item $(a+b)^\phi=a^\phi\cup b^\phi,\; (-a)^\phi=(D\times D)\setminus a^\phi$,
\item $(\cv{a})^\phi=\cv{(a^\phi)}$,
\item\label{en:abc} $c^\phi\supseteq a^\phi \cp  b^\phi \leftrightarrow c\geq a\comp b$
\end{enumerate}
for all $a, b, c\in A$.  If ${\c A}$ has a qualitative representation, then we
say that ${\c A}$ is a \emph{qualitatively representable algebra}.
The class of all qualitatively representable algebras we will denote by \QRA.

If $(a\comp b)^\phi = a^\phi\cp b^\phi$ for all $a, b\in \c A$ then the  qualitative representation $\phi$ is a
\emph{strong  representation}.    \RRA\ is the variety generated by the class of all strongly representable relation algebras.
\end{definition}  
If $\phi$ is a qualitative representation of a non-associative algebra $\c A$, observe that $a;b$ is always defined (for $a, b, \in\c A$) and  Definition~\ref{qualrep}.\ref{en:abc} requires that it is the minimal solution of $c\in\c A,\;c^\phi\supseteq a^\phi\circ b^\phi$.

 The class of strongly representable relation algebras is already known to be extremely complicated: without finite
 axiomatisation in first order logic, with undecidable equational theory and
 with undecidable membership problem for finite algebras~\cite{HH7}.  The class
 \QRA\ is known to be a proper subclass of \NA\ \cite{LR04,WestphalHW14}, or see
 Example~\ref{examples}.\ref{ex:ngqr} below.  We show below that the class \QRA\
 has intermediate difficulty: it is also without a finite axiomatisation
 (Theorem \ref{thm:nfa}) but it is {\bf NP-complete} to decide membership for finite algebras (Theorem \ref{thm:NP}).

The definition of qualitative representation (Definition \ref{qualrep}) is based on the definition of weak composition for partition
schemes given in \cite[\S2.3]{LR04}, however our definition applies not
just to finite partition schemes, it works even for infinite herds.  Moreover,  Ligozat and Renz appear to include two distinct notions of weak composition.  In \cite[abstract]{RL05}\nb{R1} they require that weak composition is  ``the strongest relation
containing the real composition'', in agreement with much of the relevant literature and in agreement with Definition \ref{qualrep} above.\nb{R1, R2, R3}  However, in 
\cite[Definition~3]{LR04} and in \cite[Definition~2]{Ligozat2005}  only the right to left implication of Definition~\ref{qualrep}.\ref{en:abc} is required, that is, they only require $(a\comp b)^\phi\supseteq a^\phi\cp b^\phi$ and do not insist that $c=a;b$ is the minimal solution of $c^\phi\supseteq a^\phi\cp b^\phi$ as $c$ ranges over elements of the algebra.  We call this looser definition of a qualitative representation a \emph{feeble} representation and investigate it separately in Section~\ref{sec:feeble}.
Example~\ref{examples}.\ref{2-ex}  and \ref{examples}.\ref{ex:ngqr} below show that there is a real discrepancy between qualitative representations and feeble representations. (\cite[Proposition~2]{WestphalHW14} dispute a quite separate point in \cite{RL05}, concerning the notion of ``closure under constraints''.)

\begin{lemma}\label{lem:xyz}
Let $\phi:\c A\rightarrow \wp(D\times D)$ respect the boolean operators, the identity and the converse operator, i.e. $1^\phi=D\times D,\; (-a)^\phi = (D\times D)\setminus a^\phi,\; (a+b)^\phi=a^\phi\cup b^\phi,\; (1')^\phi=\Id_D$ and $(\cv{a})^\phi=\set{(y, x):(x, y)\in a^\phi}$.  The following are equivalent
\begin{enumerate}
\item $\phi$ is a  qualitative representation \label{en:gqr}
\item  for all $a, b, c\in\c A$\label{en:abcA}
\[(a;b\cdot c\neq 0)\;\leftrightarrow \;\exists x, y, z\in D ((x, y)\in a^\phi\wedge(y, z)\in b^\phi\wedge(x, z)\in c^\phi).\] 
\label{en:xyz}
\end{enumerate} 
\end{lemma}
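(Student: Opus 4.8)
The plan is to peel off the parts of ``$\phi$ is a qualitative representation'' that are automatic under the standing hypotheses, and then prove the substantive content by a single duality computation. First I would observe that the image $\phi[\c A]$ is already a herd: since $\phi$ respects $+,-$ with $1^\phi=D\times D$, the image is closed under union, under complement relative to $D\times D$, and hence (via $a\cdot b=-(-a+-b)$) under intersection, and it contains $\varnothing=0^\phi$; it contains $\Id_D=(\id)^\phi$ and is closed under converse because $(\cv{a})^\phi=\cv{(a^\phi)}$. So the conditions of Definition~\ref{def:parcel} hold for free, and ``$\phi$ is a qualitative representation'' (\ref{en:gqr}) reduces to \emph{injectivity of $\phi$} together with Definition~\ref{qualrep}.\ref{en:abc}.

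The core step is to show that Definition~\ref{qualrep}.\ref{en:abc} and condition~(\ref{en:abcA}) are interdeducible by de Morgan duality. I would first rewrite the existential clause of~(\ref{en:abcA}): for fixed $x,z$ there is a $y$ with $(x,y)\in a^\phi$ and $(y,z)\in b^\phi$ precisely when $(x,z)\in a^\phi\cp b^\phi$, so that clause says exactly $(a^\phi\cp b^\phi)\cap c^\phi\neq\varnothing$. On the algebraic side, $a\comp b\cdot c\neq 0$ is the negation of $a\comp b\leq -c$. Thus~(\ref{en:abcA}) asserts, for all $a,b,c$, that $(a^\phi\cp b^\phi)\cap c^\phi\neq\varnothing\iff a\comp b\not\leq -c$. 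Substituting $-c$ for $c$ and negating both sides---using $(-c)^\phi=(D\times D)\setminus c^\phi$ to convert $(a^\phi\cp b^\phi)\cap(-c)^\phi=\varnothing$ into $a^\phi\cp b^\phi\subseteq c^\phi$---turns this verbatim into $a^\phi\cp b^\phi\subseteq c^\phi\iff a\comp b\leq c$, which is Definition~\ref{qualrep}.\ref{en:abc}. This bookkeeping yields (\ref{en:gqr})$\Rightarrow$(\ref{en:abcA}) outright and supplies the Definition~\ref{qualrep}.\ref{en:abc} half of the converse.

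To complete (\ref{en:abcA})$\Rightarrow$(\ref{en:gqr}) I must still recover injectivity, which is \emph{not} among the hypotheses; here I would specialise~(\ref{en:abcA}). Taking $b=\id$ and $a=c$ gives $a\comp b=c\comp\id=c$, so the left side becomes $c\neq 0$, while the existential clause collapses---because $(y,z)\in(\id)^\phi=\Id_D$ forces $z=y$---to the assertion $c^\phi\neq\varnothing$. Hence $c=0\iff c^\phi=\varnothing$ for every $c$, which for a boolean homomorphism is exactly injectivity. Assembling the pieces, (\ref{en:abcA}) yields injectivity, Definition~\ref{qualrep}.\ref{en:abc}, and the herd property, i.e.\ (\ref{en:gqr}).

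I do not expect a genuine obstacle: the lemma is essentially a duality identity. The two points I would watch are the direction of the biconditional under the substitution $c\mapsto -c$, and the fact that injectivity is \emph{forced} by~(\ref{en:abcA}) rather than assumed---so the lemma is correctly stated without an injectivity hypothesis. After checking these and the routine verification that $\phi[\c A]$ is a herd, the equivalence is immediate.
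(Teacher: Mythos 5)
Your proof is correct and rests on the same underlying computation as the paper's: both exploit the complementation duality between ``$(a^\phi\cp b^\phi)\cap c^\phi\neq\varnothing$'' and ``$a^\phi\cp b^\phi\subseteq (-c)^\phi$'', though you package the two directions as a single substitution $c\mapsto -c$ where the paper argues each implication separately. The one genuine addition is your explicit derivation of injectivity from condition~(\ref{en:abcA}) via the specialisation $a=c$, $b=\id$ (and your check that the image is a herd): the paper's converse direction only verifies that $\comp$ is correctly represented as weak composition and leaves injectivity implicit, so your version is in fact the more complete argument, and your observation that injectivity is forced rather than assumed is accurate.
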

\begin{proof}
For \eqref{en:gqr} $\Rightarrow$ \eqref{en:xyz}, suppose $\phi$ is a  qualitative representation.  If there are $x, y, z\in D$ such that $(x, y)\in a^\phi,\;(y, z)\in b^\phi,\; (x, z)\in c^\phi$ then $(a;b\cdot c)^\phi=(a;b)^\phi\cap c^\phi\supseteq a^\phi\cp b^\phi \cap c^\phi\neq \varnothing$, so $a;b\cdot c\neq 0$.  If no such $x, y, z$ exist then $a^\phi\cp b^\phi\subseteq (-c)^\phi$ and since $(a;b)^\phi$ is smallest such that $(a;b)^\phi\supseteq a^\phi\cp b^\phi$ it follows that $(a;b)^\phi\subseteq (-c)^\phi$, \/ $(a;b)^\phi\cap c^\phi=\varnothing$, so $a;b\cdot c=0$. 
 
 Conversely assume \eqref{en:xyz}.   If $(a;b)^\phi\not\supseteq a^\phi\cp b^\phi$ then there are $x, y, z$ with $(x, y)\in a^\phi,\; (y, z)\in b^\phi$ and $(x, z)\in (-(a;b))^\phi$, hence by \eqref{en:xyz} $a;b\cdot (-(a;b))\neq 0$, a contradiction.  We conclude $(a;b)^\phi\supseteq a^\phi\cp b^\phi$.  If $c^\phi\supseteq a^\phi\cp b^\phi$ then there do\nb{R3} not exist $x, y, z$ such that $(x, y)\in a^\phi,\; (y, z)\in b^\phi$ and $(x, z)\in(-c)^\phi$ so by \eqref{en:xyz}, $a;b\cdot(-c)=0$ and $c\geq a;b$, thus $c=a;b$ is the minimal solution in $\c A$ of $c^\phi\supseteq  a^\phi\cp b^\phi$, this proves that $;$ is correctly represented as weak composition.
  \end{proof}

\section{Atom Structures and Examples}

In the case of an atomic algebra, a convenient way of specifying the operators
is by defining its \emph{atom structure}. 
\begin{definition}
Let $X$ be the set of atoms \up(minimal, non-zero elements\up) of a
non-associative algebra $\c A$.  The atom structure $At(\c A)$ is defined as 
\[At(\c A)=(X, E, \cv{ }, C)\]
where $E$ is the set of atoms below the identity, $\cv{ }$ is the converse
function restricted to atoms, and $C$ is the set of consistent triples of atoms,
that is, those triples of atoms $(a, b, c)$ such that $a\comp b\geq c$.  

Conversely, given $(X, E, \cv{ }, C)$, where $E\subseteq X$, $\cv{ }$ is a
unary function on $X$ and $C\subseteq X^3$ we may define the \emph{complex
  algebra} $\Cm(X, E, \cv{ }, C) = (\pw(X), \varnothing, X, \cup, \setminus, E,
\cv{ }, \comp )$ where $\pw(X)$ is the power set of $X$, \/ $E$ is the identity
element, $\cv{ }$ is extended to sets of atoms by taking unions, and
multiplication is defined by $S\comp T=
\{x\in X\colon \exists s\in S,\; t\in T\; (s, t,  x)\in X\}$. 
\end{definition}
For any atomic non-associative algebra $\c A$ with atoms $X$, the map defined by $a\mapsto\set{x\in X:x\leq a}$ is an embedding of  $\c A$ into $\Cm(At(\c A))$ and in the case where $\c A$ is complete and atomic, this map is surjective, i.e. an isomorphism.\nb{R3}
Observe, for finite algebras, that the number of atoms is the logarithm (base
two) of the number of elements of the algebra.  It is clear, by additivity, that the constant $1'$ and the operators $\cv{}, ;$ are determined by the atom structure, when the atom structure is finite and in fact, as we noted earlier, every non-associative algebra is \emph{completely} additive so the operators of an arbitrary atomic, non-associative algebra are determined by its atom structure.  

The next lemma is proved in \cite[Theorem 2.2(2)]{Ma82}.\nb{R1}
\begin{lemma}\label{lem:na}
Let $(S, E, \cv{ }, C)$ consist of a set $S$, a subset $E\subseteq S$, a unary
function $\cv{ }\colon S\rightarrow S$ satisfying $\cv{\cv{s}}=s$, and subset
$C\subseteq S\times S\times S$.  The following are equivalent 
\begin{itemize}
\item $(S, E, \cv{ }, C)$ is the atom structure of some non-associative algebra,
\item  For all $a, b, c\in S$ we have $a=b$ iff there is $e\in E$ such that 
$(e, a, b)\in C$, and  if $(a, b, c)\in C$ then 
$(\cv{b}, \cv{a}, \cv{c})\in C$ and $(\cv{c}, a,\cv{b})\in C$. 
\end{itemize}
\end{lemma}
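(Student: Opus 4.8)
The statement is an equivalence, so I would prove the two implications separately; the reverse implication (that conditions on $C$ suffice to build an algebra) is the substantive one, and the forward implication is essentially a dictionary translating the $\NA$ axioms into statements about consistent triples.

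For $(1)\Rightarrow(2)$, suppose $(S,E,\cv{\ },C)=At(\c A)$ for a non-associative algebra $\c A$, so that $S$ is the set of atoms and $(a,b,c)\in C$ means $c\leq a\comp b$, equivalently $a\comp b\cdot c\neq 0$. The identity condition then just unwinds $\id\comp a=a$: if $(e,a,b)\in C$ with $e\in E$ then $b\leq e\comp a\leq\id\comp a=a$, forcing $b=a$ since both are atoms; conversely, writing $\id=\sum E$ and using complete additivity, $a=\sum_{e\in E}(e\comp a)$, so the atom $a$ lies below some $e\comp a$, giving $(e,a,a)\in C$. The condition $(a,b,c)\in C\Rightarrow(\cv b,\cv a,\cv c)\in C$ is read off from $\cv{(a\comp b)}=\cv b\comp\cv a$ together with the fact that converse is a boolean automorphism (hence preserves ``$\neq 0$''), and $(a,b,c)\in C\Rightarrow(\cv c,a,\cv b)\in C$ is a direct application of the Peircean law.

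For $(2)\Rightarrow(1)$, the plan is to take $\c A=\Cm(S,E,\cv{\ },C)$ and verify it is a non-associative algebra whose atom structure is the given one. The boolean reduct $\wp(S)$ is automatically a boolean algebra, and by construction the complex-algebra operations $\comp$ and $\cv{\ }$ are normal and (completely) additive, so the normality/additivity axioms hold for free. The key preliminary observation is that, although condition (2) states the two closure properties as one-way implications, each is in fact an \emph{equivalence}: the rotation $\sigma\colon(a,b,c)\mapsto(\cv c,a,\cv b)$ satisfies $\sigma^3=\mathrm{id}$ (using $\cv{\cv s}=s$) and the reflection $(a,b,c)\mapsto(\cv b,\cv a,\cv c)$ is an involution, so the single implications propagate around their cycles and $C$ is invariant under the whole $6$-element group they generate. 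From the reflection-invariance one obtains $\cv{(X\comp Y)}=\cv Y\comp\cv X$, while $\cv{\cv X}=X$ is immediate, giving the involution axioms.

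It then remains to check the monoid identity law and the Peircean law. For $E\comp X=X=X\comp E$, the left identity is immediate from the identity condition; for the right identity I would apply $\sigma^2$ to carry the $E$-coordinate of $(s,e,x)$ into the first position, obtaining $(e,\cv x,\cv s)\in C$, and then invoke the identity condition to conclude $s=x$. For the Peircean law, unwinding the complex-algebra definition of $X\comp Y\cdot\cv Z$ shows that it reduces exactly to the instance $(a,b,c)\in C\iff(b,\cv c,\cv a)\in C$, i.e.\ to $\sigma^2$-invariance, which is already in hand. Finally, the atoms of $\wp(S)$ are the singletons $\{s\}$, with $\{s\}\leq E$ iff $s\in E$, $\cv{\{s\}}=\{\cv s\}$, and $\{a\}\comp\{b\}\geq\{c\}$ iff $(a,b,c)\in C$, so $At(\c A)$ recovers $(S,E,\cv{\ },C)$. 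The main obstacle is precisely the recognition that the one-directional closure conditions upgrade to full group invariance; this is what makes the right-identity law and the Peircean law fall out, and the right-identity law is the one place where the rotation $\sigma$ (rather than just the reflection) is genuinely needed.
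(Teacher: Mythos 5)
The paper offers no proof of this lemma at all: the sentence immediately preceding it says the result ``is proved in \cite[Theorem 2.2(2)]{Ma82}'', so your write-up is being compared against an outsourced citation rather than an in-paper argument. What you have written is essentially the standard verification that Maddux's theorem encapsulates, and its core is correct. In particular, the observation that the two one-way closure conditions upgrade to invariance of $C$ under the full six-element group of Peircean transforms --- because the rotation $\sigma\colon(a,b,c)\mapsto(\cv{c},a,\cv{b})$ has order three and the reflection $(a,b,c)\mapsto(\cv{b},\cv{a},\cv{c})$ has order two --- is exactly the right mechanism, and your reduction of the Peircean law in $\Cm(S,E,\cv{\ },C)$ to $\sigma^2$-invariance is accurate: $X\comp Y\cdot\cv{Z}\neq 0$ iff $(x,y,\cv{z})\in C$ for some $x\in X,\,y\in Y,\,z\in Z$, and $\sigma^{2}(x,y,\cv{z})=(y,z,\cv{x})$. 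One small omission in the converse direction: for $X\comp E=X$ you only argue $X\comp E\subseteq X$; the reverse inclusion needs, for each $x$, some $e\in E$ with $(x,e,x)\in C$, which you get by applying $\sigma$ to the triple $(e,\cv{x},\cv{x})$ supplied by the identity condition.

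The one genuine gap is in your forward direction, at the step ``writing $\id=\sum E$ and using complete additivity''. This is valid only when $\c A$ is \emph{atomic} (it suffices that $\id$ be the supremum of the atoms below it), and the point is not cosmetic: the implication from the first bullet to the second is \emph{false} for general non-associative algebras, so a correct proof must invoke atomicity exactly where you silently did. Concretely, imitate Examples~\ref{examples}.\ref{2-ex} but smear the two subidentity atoms into a nontrivial atomless boolean algebra $\c B$: on the boolean algebra $\c B\times\mathbf{2}$ put $\id=(1,0)$, let every element be self-converse, and define composition by additivity from $(d_1,0)\comp(d_2,0)=(d_1\cdot d_2,0)$, \ $(d,0)\comp(0,1)=(0,1)\comp(d,0)=(0,1)$ for $d\neq 0$ (and $=0$ for $d=0$), and $(0,1)\comp(0,1)=(1,1)$. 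The Peircean law holds because $x\comp y\cdot\cv{z}\neq 0$ unwinds to a condition symmetric in $x,y,z$, and the remaining axioms are routine, so this is a non-associative algebra; its unique atom is $a=(0,1)$, giving the atom structure $(\set{a},\varnothing,\mathrm{id},\set{(a,a,a)})$, which violates the identity condition since $E=\varnothing$. Thus the lemma must be read with ``atomic'' inserted (equivalently, as in Maddux, as a statement about when $\Cm(S, E, \cv{\ }, C)$ lies in $\NA$ --- which is also how the paper uses it). Under that hypothesis your forward direction is correct: in an atomic boolean algebra $\id$ is the supremum of $E$, complete additivity is the cited J\'onsson--Tarski fact, and then $e\comp a\leq\id\comp a=a$ forces $e\comp a=a$ for some $e\in E$; the converse and rotation clauses, via the boolean automorphism property of $\cv{\ }$ and the Peircean law, are fine as you state them.
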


The six triples $(a, b, c), (b, \cv{c}, \cv{a}), (c, \cv{b}, a), (\cv{a}, c, b),
(\cv{b}, \cv{a}, \cv{c}), (\cv{c}, a, \cv{b})$ are called the \emph{Peircean
  transforms} of $(a, b, c)$. In practice, the triples are given in
\emph{composition tables} such as the one used in Figure~\ref{fig} to define the
point algebra. The entry for $a\comp b$ is the join of the 
set $\{c\colon (a,b,c)\in C\}$, so if $(a,b,c_1),(a,b,c_2),(a,b,c_3)\in C$, the
entry for $a\comp b$ will be $c_1+c_2+c_3$.

In the following examples we define some finite non-associative algebras by
giving their atom structures, except for the final example which is defined directly. 
\begin{examples}\label{examples}
\begin{enumerate}
\item
\label{2-ex}
The first of our  non-associative algebras has three atoms, $\{e, e', a\}$, hence
eight elements.  The identity is $e+e'$, each element is self-converse,
multiplication is defined in the table on the left below
\[ 
\begin{tikzpicture}
\node at (-5,0){
$\begin{array}[t]{c|ccc}
\comp &e&e'&a\\
\hline
e&e&0&a\\
e'&0&e'&a\\
a &a&a&1
\end{array}$};
\tikzset{vertex/.style = {shape=circle,draw,minimum size=1.5em}}
\tikzset{edge/.style = {->}}
\node[vertex] (x) at (0,0) {$0$};
\node[vertex] (y) at (2,1) {$1$};
\node[vertex] (z) at (2,-1) {$2$};
\draw[thick,<->](x) to (y);
\draw[thick,<->](x) to (z);
\draw[thick,<->](z) to (y);

\draw [thick,->] (y) to [thick,out=45,in=315,looseness=4] (y);
\draw [thick,->] (x) to [thick,out=225,in=135,looseness=4] (x);

\draw [thick,->] (z) to [thick,out=45,in=315,looseness=4] (z);
\node at  (-.8,0)  {$e$};
\node at  (2.8,1)  {$e'$};
\node at  (2.8,-1)  {$e'$};

\node[above] at (1,.5) {$a$};
\node[below] at (1,-.5) {$a$};
\node[right] at (2,0) {$a$};

  \end{tikzpicture}
\]
Multiplication is not associative, for example $(e\comp e')\comp a = 0\comp a=0$ but $e\comp (e'\comp a)=e\comp a=a$. 
This non-associative algebra has a qualitative representation $\phi$ over a base of three points $\set{0,1,2}$, shown on the right, above:
$e^\phi=\Id_{\set{0}},\;  (e')^\phi=\Id_{\set{1,2}}$ and $a^\phi=\set{(x, y):x\neq y<3}$,  $\phi$ is defined on sums of atoms by
additivity.

Now let $\theta$ be obtained by restricting the qualitative representation $\phi$ to the base $\set{0, 1}$, illustrated below.
\[
\begin{tikzpicture}
\tikzset{vertex/.style = {shape=circle,draw,minimum size=1.5em}}
\tikzset{edge/.style = {->}}
\node[vertex] (x) at (0,0) {$0$};
\node[vertex] (y) at (2,0) {$1$};
\draw [edge,thick] (y) to[bend right] (x);
\draw [edge,thick] (x) to[bend right] (y);
\node [above] at (1,0.3) {$a$};
\node [below] at (1,-0.3) {$a$};
\node  at (-0.8,0) {$e$};
\node at (2.8,0.06) {$e'$};
\draw [thick,->] (y) to [thick,out=45,in=315,looseness=4] (y);
\draw [thick,->] (x) to [thick,out=225,in=135,looseness=4] (x);
  \end{tikzpicture}
  \]
Since $(b;c)^\theta\supseteq b^\theta\cp c^\theta$ for any $b, c$ in the algebra and all atoms are witnessed, $\theta$ is a feeble representation over the base $\set{0,1}$.  However, $a^\theta\cp a^\theta=\Id_{\set{0,1}}$ and $a;a=1$ is not minimal subject to containing $a^\theta\cp a^\theta$ \up(the minimal solution is $e+e'<1$\up), so $\theta$ is not a qualitative representation.

\item \label{3-ex}
This non-associative algebra has atoms $\{1', a, a', \times\}$ \up(so
sixteen elements\up).  All elements are self-converse, this time the identity $1'$
is an  atom, multiplication is defined by 
\[\begin{array}{c|cccc}
\comp &1'&a&a'&\times\\
\hline
1'&1'&a&a'&\times\\
a&a&1'+a&0&\times\\
a'&a'&0&1'+a'&\times\\
\times&\times&\times&\times&1'+a+a'
\end{array}
\]
Again, associativity fails because $(a\comp a')\comp a'=0\comp a'=0$ but
$a\comp (a'\comp a')=a\comp (1'+a')=a$.   A qualitative representation $\theta$ on base $\set{0,1,2}\cup\set{0', 1', 2'}$,  shown below, is defined by
$(1')^\theta=\Id_{\set{x, x':x<3}}$,\/ $a^\theta=\{(m, n)\colon m\neq n,\; m, n<3\}$,
$(a')^\theta = \{(m', n')\colon m\neq n,\;  m, n<3\}$, 
$\times^\theta = \{(m, n'), (m', n)\colon m, n<3\}$. Here and below, reflexive identity loops have been omitted\nb{R2}.
\[
\begin{tikzpicture}
\tikzset{vertex/.style = {shape=circle,draw,minimum size=1.5em}}
\tikzset{edge/.style = {<->}}
\node[vertex](x) at (0,0){$0$};
\node[vertex](y) at (3,0){$1$};
\node[vertex](z) at (1.5,2){$2$};
\draw[edge,thick](x) to (y);
\draw[edge,thick](y) to (z);
\draw[edge,thick](x) to (z);
\node [above] at(1.5,0) {$a$};
\node [below] at(.75,1) {$a$};
\node [below] at(2.25,1) {$a$};

\node[vertex](u) at (5,0){$0'$};
\node[vertex](v) at (8,0){$1'$};
\node[vertex](w) at (6.5,2){$2'$};
\draw[edge,thick](u) to (v);
\draw[edge,thick](v) to (w);
\draw[edge,thick](u) to (w);
\node [above] at(6.5,0) {$a'$};
\node [below] at(5.75,1) {$a'$};
\node [below] at(7.25,1) {$a'$};

\draw[edge,thin] (x) to [bend right] (u);
\draw[edge,thin] (x) to [bend right] (v);
\draw[edge,thin] (y) to [bend left] (u);
\draw[edge,thin] (y) to [bend right] (v);
\draw[edge,thin] (z) to [bend left] (u);
\draw[edge,thin] (z) to [bend left] (v);
\draw[edge,thin] (z) to [bend left] (w);
\draw[edge,thin] (x) to [bend left] (w);

\draw[edge,thin] (y) to [bend left] (w);

\node at (4, 2.5) {$\times$};
\node at (4, 1.8) {$\times$};
\node at (4, 0) {$\times$};
\node at (4, -1) {$\times$};
\end{tikzpicture}
\]

\item \label{ex:4}
The next  algebra has atoms  $\set{1', a, b,
  c}$ where the identity is $1'$, all atoms are self-converse and multiplication
is defined by the table below, known to Maddux as relation algebra $25_{65}$,
\cite{Mx:book}.  A strong representation of it is illustrated on the right. 
\[\begin{tikzpicture}
\node at (-4,2)
{$\begin{array}[t]{c|cccc}
\comp&1'&a&b&c\\
\hline
1'&1'&a&b&c\\
a&a&1'&c&b\\
b&b&c&1'&a\\
c&c&b&a&1'
\end{array}$};

\tikzset{vertex/.style = {shape=circle,draw,minimum size=1.5em}}
\tikzset{edge/.style = {->}}
\node[vertex](x) at (0,0) {$1$};
\node[vertex](y) at (4,0) {$2$};
\node[vertex](w) at (2,3) {$3$};
\node[vertex](z) at (2,1) {$0$};
\draw[thick,<->] (x) to (y);
\draw[thick,<->] (x) to (z);
\draw[thick,<->] (x) to (w);
\draw[thick,<->] (y) to (z);
\draw[thick,<->] (y) to (w);
\draw[thick,<->] (w) to (z);
\node[below] at (2,0) {$a$};
\node[left] at (1,1.5) {$c$};
\node[above] at (1,.5) {$b$};
\node[left] at (2,2){$a$};
\node[above] at (3,.5){$c$};
\node[right] at (3,1.5) {$b$};

\end{tikzpicture}
\]
This algebra happens to be associative \up(hence a relation algebra\up).  The
only consistent triples of non-identity atoms are the permutations of $(a, b,
c)$.   If we restrict the base to a set of three elements, say $\set{1,2,3}$, we
obtain a different qualitative representation, no longer a strong representation
because although $1'=c;c$ and $(2,2)$ is in the representation of $1'$, there is
no point $v$ in the base $\set{1,2,3}$ such that $(2, v)$ and $(v, 2)$ are in
the representation of $c$. This relation algebra can have no qualitative
representation, nor even a feeble representation, on a base of more than four
points, because it is impossible to colour the edges of $K_5$ using three
colours, $a, b, c$,  while avoiding triangles with two edges of the same colour.  In fact the two qualitative representations just mentioned are the only
qualitative representations of this relation algebra, up to base isomorphism. 

\item\label{ex:ngqr}
Our next example is a non-associative algebra\nb{R1, R3} which does not
have a qualitative representation, though it has a feeble one. Its atoms are
$\set{e,  e', a, \cv{a}}$, \/ $1'=e+e'$ and composition is given by 
\[
\begin{array}{c|cccc}
;&e&e'&a&\cv{a}\\
\hline
e&e&0&a&0\\
e'&0&e'&0&\cv{a}\\
a&0&a&a&a+\cv{a}+e\\
\cv{a}&\cv{a}&0&a+\cv{a}+e'&\cv{a}
\end{array}
\]
By Lemma~\ref{lem:na} this is the atom structure of a non-associative algebra
\up(not associative because $a= a;a=(a;e');a \neq a;(e';a)=a;0=0$\up).  A feeble representation $\theta$ on the  points $0, 1$ is shown below
\[
\begin{tikzpicture}
\tikzset{vertex/.style = {shape=circle,draw,minimum size=1.5em}}
\tikzset{edge/.style = {->}}
\node[vertex] (x) at (0,0) {$0$};
\node[vertex] (y) at (2,0) {$1$};
\draw [edge,thick] (y) to[bend right] (x);
\draw [edge,thick] (x) to[bend right] (y);
\node [above] at (1,0.3) {$\cv{a}$};
\node [below] at (1,-0.3) {${a}$};
\node  at (-0.8,0) {$e$};
\node at (2.8,0.06) {$e'$};
\draw [thick,->] (y) to [thick,out=45,in=315,looseness=4] (y);
\draw [thick,->] (x) to [thick,out=225,in=135,looseness=4] (x);
  \end{tikzpicture}
  \]
  If $\phi$ were
a  qualitative representation then since $a;a\cdot a\neq 0$ there
would be $x, y, z$ in the base such that $(x, y), (y, z), (x, z)\in a^\phi$ and
$(y, y)\in (1')^\phi$.  Since $1'=e+e'$ either $(y, y)\in e^\phi$ or $(y, y)\in
(e')^\phi$, in the former case $(x, y)\in a^\phi\cp e^\phi\subseteq
(a;e)^\phi=0^\phi$ and in the latter case $(y, z)\in (e';a)^\phi=0^\phi$, in
each case we get a contradiction.

\item  Examples of relation algebras not even possessing feeble representations are provided in the proof of Theorem~\ref{thm:nfa} below.

\item\label{mk}
This example shows that associativity does not suffice to ensure a qualitatively
representable algebra has a strong representation.  Let $\mathcal{K}$ be
McKenzie's non-representable algebra  
\up(see~\cite{McK70}\up). 
It is defined by the following multiplication table for the atoms
\[
\begin{array}{lr}
\begin{array}[t]{c|cccc}
\comp &1'&a&\cv{a}&b\\
\hline
1'&1'&a&\cv{a}&b\\
 a&a&a&1&a+b\\
 \cv{a}&\cv{a}&1&\cv{a}&\cv{a}+b\\
b&b&a+b&\cv{a}+b&-b
\end{array}
&\hspace{.5in}
\xymatrix@R=.7pc {
&\bullet&\\
\bullet\ar@{->}[ru]&&\\
&&\bullet\ar@{->}[luu]\\
\bullet\ar@{->}[uu]&&\\
&\bullet\ar@{->}[lu]\ar@{->}[ruu]&
}
\end{array}
\]
where $\cv{b} = b$. As McKenzie showed, it is associative but has no strong representation.  Let $\mathbf{N}_5$ be the pentagon
lattice considered as an ordered set, illustrated in the right  above. It is easy to show that boolean 
combinations of the relations $\Id$, $<$, $>$, $\#$ \up(where $\#$ stands for incomparability\up),
form a herd over $\mathbf{N}_5\times\mathbf{N}_5$, and the map
$1'\mapsto\ \Id$, \/$a\mapsto\ <$, \/$\cv{a} \mapsto\ > $, \/$b\mapsto \#$, extends naturally
to a qualitative representation of $\mathcal{K}$. We leave it as an instructive
exercise to prove that no qualitative representation can exist over a set with~4 or
fewer elements. \up(Hint: you have to be able to compose incomparability with itself
and get $<$ and $>$.\up)

\item\label{ex:final} Our final example is an infinite relation algebra, for
  expressing metric constraints on a linearly ordered metric space.  Its elements are finite
  unions of real intervals, e.g. $(2, 5)\cup [6,8]$.  There is one identity atom,
  namely $[0,0]$,  converse is defined by $\cv{(m, n)}= (-n, -m)$, and
  composition is defined by $(m, n);(m', n')=(m+m', n+n')$ for $m>n,\; m'>n'$,
  with similar definitions for closed and semi-open intervals.  A strong
  representation $\theta$ over the real numbers may be obtained by letting $(x,
  y)\in (m, n)^\theta\iff m< y-x< n$, with similar definitions for closed and
  semi-open intervals.  This provides a useful way of expressing metric constraints between points, e.g. the constraint $(x, y)\in ([-3,-2]\cup [2,3])^\theta$ means that the distance between $x$ and $y$ is at least two and not more than three. 
\end{enumerate} 
\end{examples}

\section{Semi-associativity and associativity}

  In order to axiomatise the class of qualitatively representable algebras we might
  start by taking the axioms of non-associative relation algebra and add some
  weakening of the associativity law.  Maddux defines two such weakenings: the
  \emph{semi-associative law} $x\comp (1\comp 1)=(x\comp 1)\comp 1$ and the \emph{weak-asociativity
    law} \cite{Ma82} $ (x\cdot 1')\comp (1\comp 1)=((x\cdot 1')\comp 1)\comp 1$.   In the definition of the qualitative calculus given by \cite{LR04} the identity $1'$ is required to be an atom and for this case the weak associativity law is sure to hold (since $x\cdot 1'$ is either $0$ or $1'$) as shown in  \cite[Section~3]{WestphalHW14}.\nb{R2}   However, when we consider cases where the identity is not an atom, we see that  the algebra of Examples~\ref{examples}.\ref{2-ex} above fails the weak associativity law (and consequently also fails the stronger semi-associativity law) because $(e\comp 1)\comp 1=(e+d)\comp 1=1$ but
  $e\comp (1\comp 1)=e\comp 1=e+d$, so  weak associativity is not valid over  qualitative representations.
  On the other hand, since $(a;b)^\theta\supseteq a^\theta\cp b^\theta$, for any qualitative (or feeble) representation $\theta$ over domain $D$, it follows that
the equation
\[(1\comp x)\comp 1=1\comp (x\comp 1)\] 
is an example of a validity over qualitative (respectively, feeble) representations (in any herd on base $D$, both sides equal $1=D\times D$ if $x\neq 0$ and both sides equal $0$ if $x=0$).

An algebra is \emph{integral} if $x\comp y=0\rightarrow x=0 \text{ or } y=0$.  It is
known for semi-associative algebras that an algebra is integral iff the
identity is an atom \cite[Theorem~4]{Ma90:nec}.  However, in the algebra of  Example~\ref{examples}.\ref{3-ex}
above, the identity is an atom but the algebra is not integral.  It follows that
this algebra is not semi-associative, indeed $d\comp 1=1'+d+f$ but
$(d\comp 1)\comp 1=1$.

Semi-associativity for weak composition was touched upon in~\cite{LR04} where
it was shown that if relations are \emph{serial} (have total domains) then the
weak composition is semi-associative. In fact, in our more general setting where the identity need not be an atom, it suffices to assume that
nonempty relations have pairwise overlapping domains.
\begin{lemma}
Let $\phi$ be a qualitative representation of a non-associative algebra
 $\mathcal{A}$ to a herd $\c S$. Then, the
following are equivalent\up:
\begin{enumerate}
\item If $a, b\in \c A\setminus\set0$ then $\cv{(a^\phi)}\cp b^\phi\neq \varnothing$ \up(nonempty relations have overlapping domains\up),\label{en:over}
\item $\mathcal{A}$ is integral,\label{en:integral}
\item $\mathcal{A}$ is semi-associative and the identity is an atom. \label{en:atom}
\end{enumerate}
\end{lemma}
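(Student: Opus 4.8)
The plan is to read all three conditions as statements about the \emph{domains} of the represented relations, using the characterisation of Lemma~\ref{lem:xyz}. Write $\operatorname{dom}(R)=\set{u:\exists v\;(u,v)\in R}$. First I would record two translations. Taking $c=1$ in Lemma~\ref{lem:xyz} and using $1^\phi=D\times D$, for any $a,b$ one gets $a\comp b\neq 0$ iff there are $x,y,z$ with $(x,y)\in a^\phi$ and $(y,z)\in b^\phi$, i.e. iff $a^\phi\cp b^\phi\neq\varnothing$. Secondly, since $\cv{(a^\phi)}=(\cv a)^\phi$, the relation $\cv{(a^\phi)}\cp b^\phi$ is nonempty exactly when some point lies in both $\operatorname{dom}(a^\phi)$ and $\operatorname{dom}(b^\phi)$; so condition~\eqref{en:over} says precisely that $\operatorname{dom}(a^\phi)\cap\operatorname{dom}(b^\phi)\neq\varnothing$ for all nonzero $a,b$, and applying the first translation to $\cv a,b$ shows this is the same as $\cv a\comp b\neq 0$ for all nonzero $a,b$.

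For \eqref{en:over}$\Leftrightarrow$\eqref{en:integral}: since $a\mapsto\cv a$ is a bijection of $\c A\setminus\set0$ onto itself, the condition ``$\cv a\comp b\neq 0$ for all nonzero $a,b$'' is identical to ``$a\comp b\neq0$ for all nonzero $a,b$'', which is exactly integrality. This direction is essentially bookkeeping through the two translations above.

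For \eqref{en:over}$\Rightarrow$\eqref{en:atom} I would argue semantically. If $1'$ were not an atom there would be some $0\neq e<1'$; then $f=1'\meet(-e)\neq0$ with $e\meet f=0$. As $e^\phi,f^\phi\subseteq\Id_D$ are diagonal relations, their domains coincide with their supports, so $e\meet f=0$ forces $\operatorname{dom}(e^\phi)\cap\operatorname{dom}(f^\phi)=\varnothing$, contradicting \eqref{en:over}; hence the identity is an atom. For semi-associativity, note first that $D\neq\varnothing$ (an injection $\phi$ with $1^\phi=D\times D\neq\varnothing=0^\phi$ requires it, the degenerate algebra $0=1$ being set aside), so $1^\phi\cp 1^\phi=D\times D$ and therefore $1\comp1=1$. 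Now fix nonzero $x$ and suppose $x\comp1\neq1$; put $w=-(x\comp1)\neq0$. Since Definition~\ref{qualrep}.\ref{en:abc} gives $(x\comp1)^\phi\supseteq x^\phi\cp 1^\phi=\operatorname{dom}(x^\phi)\times D$, every pair in $w^\phi=(D\times D)\setminus(x\comp1)^\phi$ has first coordinate outside $\operatorname{dom}(x^\phi)$, so $\operatorname{dom}(x^\phi)\cap\operatorname{dom}(w^\phi)=\varnothing$, again contradicting \eqref{en:over}. Thus $x\comp1=1$ for every nonzero $x$, whence $x\comp(1\comp1)=x\comp1=1=1\comp1=(x\comp1)\comp1$ when $x\neq0$, while both sides are $0$ when $x=0$; this is semi-associativity.

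Finally \eqref{en:atom}$\Rightarrow$\eqref{en:integral} is immediate from \cite[Theorem~4]{Ma90:nec}: among semi-associative algebras, being integral is equivalent to the identity being an atom. Combined with \eqref{en:over}$\Leftrightarrow$\eqref{en:integral} and \eqref{en:over}$\Rightarrow$\eqref{en:atom}, this closes the cycle $\eqref{en:atom}\Rightarrow\eqref{en:integral}\Rightarrow\eqref{en:over}\Rightarrow\eqref{en:atom}$ and yields the three-way equivalence. The genuinely delicate point is the semi-associativity step: because weak composition only returns the \emph{minimal} herd element above the true composition, one must work throughout with the containment $(x\comp1)^\phi\supseteq x^\phi\cp1^\phi$ rather than with equality, and verify $1\comp1=1$ (hence the need for $D\neq\varnothing$) separately; the remaining steps are routine translations through Lemma~\ref{lem:xyz}.
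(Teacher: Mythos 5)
Your proof is correct, but it closes the cycle along a different leg than the paper does. The translations you set up (condition~\eqref{en:over} as overlapping domains, and $a\comp b\neq 0\iff a^\phi\cp b^\phi\neq\varnothing$ via Lemma~\ref{lem:xyz} with $c=1$) give you \eqref{en:over}$\Leftrightarrow$\eqref{en:integral} in essentially the same way the paper gets it, namely through the minimality clause of weak composition. Where you diverge is the route to \eqref{en:atom}: the paper proves \eqref{en:integral}$\Rightarrow$\eqref{en:atom} \emph{purely algebraically}, showing that in any non-associative algebra a failure $a\comp 1<(a\comp 1)\comp 1$ yields some $b\neq 0$ with $a\comp 1\cdot b=0$, hence $\cv{a}\comp b=0$ by the Peircean law, contradicting integrality; the representation never enters, so the equivalence \eqref{en:integral}$\Leftrightarrow$\eqref{en:atom} is established for all of $\NA$. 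You instead prove \eqref{en:over}$\Rightarrow$\eqref{en:atom} semantically inside the herd: disjoint subidentity elements have disjoint (diagonal) domains, and if $x\comp 1<1$ then $-(x\comp 1)$ has domain disjoint from that of $x$. This is a clean argument and incidentally yields the stronger conclusion $x\comp 1=1$ for every nonzero $x$; on the other hand it buys less generality, since it only applies in the presence of a qualitative representation, whereas the paper's Peircean-law computation is representation-free. Two small remarks: the verification $1\comp 1=1$ does not need $D\neq\varnothing$, since $1=1\comp 1'\leq 1\comp 1$ holds in any non-associative algebra by monotonicity; and your explicit exclusion of the degenerate algebra $0=1$ is actually more careful than the paper, which leaves that (vacuous but technically exceptional) case implicit.
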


\begin{proof}
As we noted, in semi-associative
algebras integrality is equivalent to the identity being an atom  so we get \eqref{en:atom} $\Rightarrow$ \eqref{en:integral}.  To prove \eqref{en:integral} $\Rightarrow$ \eqref{en:atom} we will
show that integrality implies semi-associativity in non-associative algebras.
Working backwards, suppose semi-associativity fails in $\mathcal{A}$. By monotonicity of composition in non-associative algebras, since $1'\leq 1$,  we have $x\comp(1\comp 1) = x \comp 1\leq (x\comp 1)\comp 1$, for any $x\in \mathcal{A}$.  Thus, the
failure of semi-associativity is witnessed by some  $0\neq a\in \mathcal{A}$ such that 
$a \comp 1 < (a;1);1$. So, there is  $(a;1);1\geq b\neq 0$ such that $a\comp 1 \cdot b = 0$. 
But then, 
$\cv{a}\comp b \cdot 1 = 0$ by the Peircean law, and so  
$\cv{a}\comp b = 0$ where $\cv{a}, b\neq 0$ and the algebra is not integral.  Thus the equivalence of \eqref{en:integral} and \eqref{en:atom} is true in any non-associative algebra.

To prove the equivalence of
(\ref{en:integral}) and (\ref{en:over}), observe that $a=0\iff\cv a=0$ and  $(\cv a)^\phi\cp b^\phi=\varnothing \iff (\cv a;b)^\phi=\varnothing\iff \cv a;b=0$, using the fact that $c= \cv a;b$ is the minimal solution of $c^\phi\supseteq (\cv a)^\phi\cp b^\phi$.
\end{proof}

Let $\mathcal{A}$ be a non-associative algebra, and $\phi$ be a qualitative representation, i.e. a map
from $A$ to $\pw(D\times D)$ for some set $D$ satisfying the conditions of Definition~\ref{qualrep}.
Consider the following condition on $\phi$: 
\begin{equation}\tag{*}\label{ass}
(a^\phi\cp b^\phi)\cap(c^\phi\cp d^\phi) =\varnothing   \iff 
(a\comp b)\cdot(c\comp d)= 0.
\end{equation}
Intuitively, (\ref{ass}) says that two consistent triangles share a label
(the right-hand side) if and only if
a quadrangle witnessing this fact can be found in the representation (the
left-hand side). Observe that the  right-to-left implication holds for any  qualitative representation $\phi$, since $(x;y)^\phi\supseteq x^\phi\cp y^\phi$.

\begin{theorem}
If there exists a qualitative representation of\/ $\mathcal{A}$ satisfying 
\emph{(\ref{ass})}, then $\mathcal{A}$ is associative. 
\end{theorem}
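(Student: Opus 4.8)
The plan is to reduce the associativity law $(x\comp y)\comp z = x\comp(y\comp z)$ to a statement about the existence of a common ``quadrilateral'' in the representation, and then to observe that the two sides of the equation force the very same quadrilateral. First I would record the elementary boolean fact that in any boolean algebra two elements $u,v$ are equal precisely when $u\cdot w=0\iff v\cdot w=0$ for every $w$ (inequality is detected by taking $w=u\cdot(-v)$). Hence it suffices to prove, for all $x,y,z,w\in\mathcal A$, that
\[((x\comp y)\comp z)\cdot w\neq 0\iff(x\comp(y\comp z))\cdot w\neq 0.\]

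Next I would use the Peircean law (axiom (4)) to pivot each of these triple products into a product of two binary compositions, which is exactly the shape that condition~\eqref{ass} can act on. On the left, the cycle law gives $((x\comp y)\comp z)\cdot w\neq 0\iff(x\comp y)\cdot(w\comp\cv z)\neq 0$; on the right it gives $(x\comp(y\comp z))\cdot w\neq 0\iff(y\comp z)\cdot(\cv x\comp w)\neq 0$. These cycle equivalences follow from the stated Peircean law together with the facts that converse is a boolean automorphism, is an involution, and antidistributes over $\comp$. Applying \eqref{ass} to each side now replaces abstract composition by genuine relational composition: the left becomes $(x^\phi\cp y^\phi)\cap(w^\phi\cp\cv{(z^\phi)})\neq\varnothing$ and the right becomes $(y^\phi\cp z^\phi)\cap(\cv{(x^\phi)}\cp w^\phi)\neq\varnothing$.

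Finally I would unwind the relational compositions and converses. The left-hand nonemptiness asserts the existence of four points $p,m,q,r$ with $(p,m)\in x^\phi$, $(m,q)\in y^\phi$, $(q,r)\in z^\phi$, and $(p,r)\in w^\phi$; the right-hand one asserts four points $v,s,u,t$ with $(v,s)\in x^\phi$, $(s,u)\in y^\phi$, $(u,t)\in z^\phi$, and $(v,t)\in w^\phi$. Relabelling $p,m,q,r$ as $v,s,u,t$ shows these are literally the same configuration: a directed $x,y,z$-path closed off by a $w$-edge. The two nonemptiness conditions therefore coincide, so the two triple products meet exactly the same elements $w$, and associativity follows by the boolean criterion above.

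The main obstacle is bookkeeping with the cycle law: one must apply the correct Peircean transform on each side so that both reductions land on the same four-point picture. The naive route --- comparing $(x\comp y)\comp z$ and $x\comp(y\comp z)$ directly via Lemma~\ref{lem:xyz} --- stalls, because $(x\comp y)^\phi$ is only an \emph{upper bound} for $x^\phi\cp y^\phi$, so knowing a pair lies in $(x\comp y)^\phi$ does not recover the intermediate vertex. Routing through condition~\eqref{ass} is precisely what eliminates this inflation, trading $(x\comp y)^\phi$ for the true relational composition $x^\phi\cp y^\phi$ and thereby exposing the missing middle point on both sides.
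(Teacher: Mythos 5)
Your proof is correct and takes essentially the same route as the paper's: pivot each triple product via the Peircean law into a product of two binary compositions, apply (*) to replace weak composition by genuine relational composition, and then observe that the two sides describe the same four-point configuration. The only (cosmetic) difference is that the paper finishes the middle step abstractly, citing the Peircean law and associativity for genuine composition $\cp$, where you unwind to explicit points and match the two quadrilaterals directly.
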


\begin{proof}  To prove associativity, let $a, b, c, d\in \mathcal{A}$ be arbitrary.  Then
\begin{align*}
((a\comp b)\comp c )\cdot d = 0 &\iff (a\comp b)\cdot(d\comp\cv{c})= 0&&\mbox{by Peircean law for }  \comp\\
 &\iff (a^\phi\cp b^\phi)\cap(d^\phi\cp \cv{c}^\phi)=\varnothing&&\eqref{ass}\\
 &\iff ((a^\phi\cp b^\phi)\cp c^\phi)\cap d^\phi=\varnothing&&\mbox{Peircean law for }\cp \\
 &\iff (a^\phi\cp (b^\phi\cp c^\phi))\cap d^\phi=\varnothing&&\mbox{associativity of }\cp \\
 &\iff(b^\phi\cp c^\phi)\cap(\cv{a}^\phi\cp d^\phi)=\varnothing&&\mbox{Peircean law for }\cp \\
 &\iff (b;c)\cdot(\cv{a};d)=0&&\eqref{ass}\\
 &\iff (a\comp (b\comp c)) \cdot d= 0&&\mbox{Peircean law for } \comp
\end{align*}
Since $d$ is arbitrary it follows that $(a;b);c=a;(b;c)$.
\end{proof}
Observe that the\nb{R1, R3}  McKenzie algebra $\mathbf{K}$  is associative, but its qualitative representation of 
 over the  base $\mathbf{N}_5$ does not satisfy
(\ref{ass}), because 
we have $(\#\cp\mathord{<})\cap(\#\cp\mathord{>}) = \varnothing$ whereas
$(b\comp a)\cdot (b\comp\cv{a}) = b$. However, there is a qualitative
representation of   $\mathbf{K}$  satisfying (\ref{ass}), for example over the base
$D = \{\bot,a_1,b_1,c_1,a_2,b_2,c_2,\top\}$ with $<$ defined
as the transitive closure of $\bot<a_1<b_1<c_1<\top$, 
$\bot<a_2<b_2<c_2<\top$.   The following conjecture remains open: if $\mathcal{A}$ is associative and has a qualitative representation then it has a qualitative representation satisfying \eqref{ass}.

\section{Network Satisfaction Problem}\label{sec:NSP}

\begin{definition}\label{def:network} Let $\c A$ be a non-associative algebra.
A \emph{network} $(N, \lambda)$ over $\c A$ consists of a finite\nb{R3} set $N$ of nodes and a
function $\lambda:(N\times N)\rightarrow \c A$.  A network $(N, \lambda)$
is \emph{consistent} if 
\begin{enumerate}
\item[($a$)] $\lambda(x,x)\leq 1'$, 
\item[($b$)] $\lambda(x, y)\comp \lambda(y, z)\;\cdot\; \lambda(x, z)\neq 0$,
 for all nodes $x, y, z\in N$, \label{en:b}
\item[($c$)] $\lambda(x, y)\cdot\lambda\cv{(y, x)}\neq 0$,
\item[($d$)] $\lambda(x, y)\neq 0$, for all nodes $x, y\in N$.  
\end{enumerate}
A network $(N, \lambda)$  is \emph{path-consistent} (or algebraically closed) if it is consistent and additionally $\lambda(x, y)\comp\lambda(y, z)\geq\lambda(x, z)$, for all $x, y, z\in N$.
An \emph{atomic network}  $(N, \lambda)$ is a network where $\lambda(x, y)$ is
always an atom of $\c A$.   Every consistent atomic network is path-consistent.

A network $(N, \lambda)$
\emph{embeds} into a strong representation $\phi$ if there is a map $\prime$
from $N$ to the base of $\phi$ such that for all $x, y\in N$ we have $(x',
y')\in \lambda(x, y)^\phi$, similarly $(N, \lambda)$ embeds into a qualitative
representation $\theta$ if there is a map $\prime$ from $N$ to the base of a
qualitative representation $\theta$ such that for all $x, y\in N$ we have $(x', y')\in
\lambda(x, y)^\theta$.   A network over $\c A$  is \emph{strongly satisfiable} if it
embeds into some strong representation of $\c A$ and it is \emph{qualitatively satisfiable} if
it embeds into some qualitative  representation of $\c A$.  Clearly, if $(N, \lambda)$
is strongly satisfiable then it is qualitatively satisfiable. 

A strong representation \nb{R1} $\phi$ of the finite relation algebra $\c A$ is
\emph{universal} \nb{R1} if every consistent atomic network embeds into $\phi$.  
\end{definition}
Note that the conditions (c) and (d) for a
consistent network follow from (a) and (b) (see~\cite[Lemma~7.2]{HH:book}), so 
we could have left them out of the definition, but since they are 
naturally expected nonetheless, we decided to keep them in.
Our definition of a \emph{consistent network} coincides with what 
Hirsch and Hodkinson~\cite{HH:book} call a \emph{network}. We believe this 
less restrictive definition of a network (essentially, just as a 
labelled complete directed graph) to be more convenient in the present context
as it is closer to 
the standard terminology used in the area of qualitative calculi as well as in 
constraint satisfaction.

\begin{remark}\label{remark}
Conisider\nb{R2} the non-associative algebra of Example~\ref{examples}.\ref{2-ex}.  It has a qualitative representation, but since it is not associative it can have no strong representation.  The network shown in Example~\ref{examples}.\ref{2-ex} is qualitatively satisfiable but not strongly satisfiable.

Let $\c A$ be an  atomic relation algebra. If $\c A$ has a universal representation then any network is qualitatively satisfiable iff it is strongly satisfiable.
It is known that the point and interval algebras have universal representations \cite{LaMa94} and it follows from results in \cite{BW11} that RCC8 has a universal representation, hence  for these three relation algebras a network is strongly satisfiable iff it is qualitatively
satisfiable\nb{R1}. 
\end{remark}

\begin{lemma}\label{lem:net}
Let $\c A$ be a finite non-associative algebra.  $\c A$ has a qualitative
representation if and only if there is a consistent atomic network $(N, \lambda)$
over $\c A$ such that for each consistent triple of atoms $(a, b, c)$ of $\c A$
there are nodes $x, y, z\in N$ such that $\lambda(x, y)=a,\; \lambda(y, z)=b$
and $\lambda(x, z)=c$. 
\end{lemma}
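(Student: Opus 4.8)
The plan is to prove both implications using Lemma~\ref{lem:xyz}, which recasts ``qualitative representation'' as the triangle condition $(a\comp b\cdot c\neq0)\leftrightarrow\exists x,y,z$, together with the observation that, restricted to atoms $a,b,c$, the inequality $a\comp b\cdot c\neq0$ says exactly that $(a,b,c)$ is a consistent triple. So a qualitative representation is essentially a labelling of ordered pairs of base points by atoms in which the realised triangles are exactly the consistent triples, and a network as in the statement is precisely a finite such labelling that realises every consistent triple. Throughout I use that a finite $\c A$ is atomic.

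For the forward direction, suppose $\phi\colon\c A\to\wp(D\times D)$ is a qualitative representation. Since $\phi$ respects the Boolean operations and $1^\phi=D\times D$, the images $a^\phi$ of the atoms partition $D\times D$, so each pair $(x,y)$ lies in a unique $a^\phi$ with $a$ an atom; set $\lambda(x,y)=a$. I would take $N$ to be a finite subset of $D$ containing, for each of the finitely many consistent triples $(a,b,c)$, a witnessing triple $x,y,z$ furnished by Lemma~\ref{lem:xyz} (applicable because $c\le a\comp b$ gives $a\comp b\cdot c=c\neq0$). Consistency of $(N,\lambda)$ is then immediate: clause ($a$) holds because $(1')^\phi=\Id_D$ forces $\lambda(x,x)\le\id$; clause ($b$) follows from Lemma~\ref{lem:xyz} applied to the three realised labels; and ($c$),($d$) follow (as noted after Definition~\ref{def:network}), or directly from disjointness of the atom images. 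By construction every consistent triple is witnessed in $N$.

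For the converse the task is to manufacture a representation from a witnessing network $(N,\lambda)$. Define $\phi$ on atoms by $a^\phi=\{(x,y):\lambda(x,y)=a\}$ and extend additively; because $\lambda$ assigns each pair exactly one atom, $\phi$ respects the Boolean operations, and clause ($c$), which gives $\lambda(y,x)=\cv{\lambda(x,y)}$, shows converse is respected. The one genuinely delicate point, and the main obstacle, is the identity: Definition~\ref{qualrep} demands $(1')^\phi=\Id$, yet nothing in consistency prevents $\lambda(x,y)\le\id$ for distinct nodes. I would resolve this through the relation $x\equiv y\iff\lambda(x,y)\le\id$. Applying clause ($b$) to the triangles $(x,y,w)$ and $(w,x,y)$ together with $\lambda(x,y)\le\id$ (so that composing with $\lambda(x,y)$ only shrinks a label) yields $\lambda(x,w)=\lambda(y,w)$ and $\lambda(w,x)=\lambda(w,y)$ for all $w$; that is, nodes joined by a subidentity atom are indistinguishable. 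This, with clause ($c$) and $\cv{\id}=\id$, shows $\equiv$ is an equivalence relation, that $\lambda$ descends to the finite quotient $N/{\equiv}$, and that on the quotient $\lambda(x,y)\le\id\iff[x]=[y]$, so $(1')^\phi=\Id_{N/\equiv}$ exactly. I then transport $\phi$ to this quotient.

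It remains to verify the composition biconditional of Lemma~\ref{lem:xyz} and injectivity. By additivity and disjointness of atom images, the biconditional for arbitrary $a,b,c$ reduces to the case of atoms, where the right-to-left direction is exactly consistency clause ($b$) (a realised triangle is consistent) and the left-to-right direction is exactly the witnessing hypothesis. For injectivity, required by Definition~\ref{qualrep}, it suffices that every atom $f$ occurs as a label: choosing a subidentity atom $e$ with $e\comp f=f$ (available since $\id\comp f=f$ and $\id$ is a join of subidentity atoms), the triple $(e,f,f)$ is consistent, hence witnessed, so $f^\phi\neq\varnothing$; as the $f^\phi$ are disjoint and nonempty, distinct elements of $\c A$ receive distinct images. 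Hence $\phi$ is a qualitative representation of $\c A$.
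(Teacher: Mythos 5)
Your proof is correct and follows essentially the same route as the paper's: the forward direction reads off an atomic labelling from the representation and restricts to a finite set witnessing all consistent triples, and the converse quotients the network by the subidentity relation $\lambda(x,y)\le 1'$ and invokes Lemma~\ref{lem:xyz}. You supply a bit more detail than the paper on the congruence property of that relation and on injectivity (via witnessed triples $(e,f,f)$), but the argument is the same.
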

\begin{proof}
Let $\phi$ be a qualitative representation over base $D$.  For each $x, y\in D$ let
$\lambda(x, y)$ be the (unique) atom $a$  such that $(x, y)\in a^\phi$, such an
atom must exist since $\c A$ is finite.  Clearly $(D, \lambda)$ is  consistent and atomic, though it might be infinite.\nb{R3}  Furthermore, if $a, b, c$ are atoms such that $a\comp b\geq c$
then there must be $x, y, z$ with $\lambda(x, y)=a,\;\lambda(y, z)=b$ and
$\lambda(x, z)=c$, by Lemma~\ref{lem:xyz}.  Hence there is a finite subset $D_0\subseteq D$ such that the restriction of $\lambda$ to $D_0$ is atomic, consistent and therefore an atomic network,  and still witnesses all triples of atoms, 

Conversely, if $(N, \lambda)$ is a consistent atomic network witnessing all
consistent triples of atoms then the binary relation $\sim$ over the nodes of $N$ defined by $x\sim y\iff N(x, y)\leq 1'$ is easily seen to be an equivalence relation, indeed a congruence.  The equivalence class of a node $x$ is denoted $[x]$.   We may define a qualitative representation $\phi$ whose base is the set of all $\sim$-equivalence classes, by
$$
a^\phi = \{([x], [y])\colon  N(x, y)\leq a\}
$$ 
for $a\in\c A$.  Since $\sim$ is a congruence $\phi$ is well-defined, since edges are
labelled by atoms $\phi$ respects the boolean operators, since the network is
consistent it is clear that $\phi$ respects  the converse
operator, and since $N(x, y)\leq 1'\iff [x]=[y]$ the identity is correctly represented.  By Lemma~\ref{lem:xyz}, $\phi$ is a  qualitative representation.
\end{proof}

\begin{lemma}\label{lem:cubic}
If $\c A$ is a finite, qualitatively representable, atomic non-associative algebra
then $\c A$ has a qualitative representation with at most $3|At(\c A)|^3$ points in its
base.
\end{lemma}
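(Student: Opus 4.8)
The plan is to combine both directions of Lemma~\ref{lem:net} with a crude counting argument. Since $\c A$ is qualitatively representable, fix a qualitative representation $\phi$ over some, possibly infinite, base $D$. As in the forward direction of Lemma~\ref{lem:net}, label each pair $(x,y)\in D\times D$ by the unique atom $\lambda(x,y)$ with $(x,y)\in\lambda(x,y)^\phi$; this turns $D$ into a consistent atomic network. By Lemma~\ref{lem:xyz}, every consistent triple of atoms $(a,b,c)$, that is, every triple with $a\comp b\geq c$, is witnessed in $D$: there are nodes $x,y,z$ with $\lambda(x,y)=a$, $\lambda(y,z)=b$ and $\lambda(x,z)=c$.

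Next I would cut $D$ down to a small witnessing set. For each consistent triple of atoms choose a single witnessing triple of nodes, and let $D_0\subseteq D$ be the union of all the nodes so chosen. As $\c A$ is atomic, there are at most $|At(\c A)|^3$ triples of atoms, hence at most that many consistent triples, and each contributes at most three nodes; so $|D_0|\leq 3|At(\c A)|^3$. The restriction of $\lambda$ to $D_0$ is again a consistent atomic network, and by construction it still witnesses every consistent triple of atoms.

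Finally I would feed the finite network $(D_0,\lambda\restriction D_0)$ into the converse direction of Lemma~\ref{lem:net}. This produces a qualitative representation of $\c A$ whose base is the set of $\sim$-equivalence classes of $D_0$, where $x\sim y\iff\lambda(x,y)\leq 1'$. Since the number of equivalence classes cannot exceed the number of nodes, this base has at most $|D_0|\leq 3|At(\c A)|^3$ points, as required.

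The whole argument is bookkeeping on top of Lemma~\ref{lem:net}, so I expect no serious obstacle; the only points that need a word of justification are that passing to the subset $D_0$ preserves both consistency and the witnessing property. Consistency is inherited because conditions $(a)$--$(d)$ in the definition of a consistent network are universally quantified over the nodes and we only delete nodes, never relabel edges between surviving ones; the witnessing property holds by the very choice of $D_0$. The count $|At(\c A)|^3$ is deliberately wasteful --- the Peircean symmetries among the consistent triples could be used to reduce it --- but the stated cubic bound already falls out of this naive estimate.
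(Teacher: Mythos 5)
Your proof is correct and follows essentially the same route as the paper's: extract a consistent atomic network witnessing all consistent triples via the forward direction of Lemma~\ref{lem:net}, cut down to at most $3|At(\c A)|^3$ chosen witness nodes, and feed the restricted network back through the converse direction. The only difference is that you spell out the bookkeeping (preservation of consistency under restriction, and that the quotient by $\sim$ cannot increase the number of points) which the paper leaves implicit.
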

\begin{proof}
By the previous lemma, if $\c A$ is qualitatively representable then there is
a consistent atomic network witnessing all consistent triples of atoms.  There
are at most $|At(\c A)|^3$ such triples, so $3|At(\c A)|^3$ points suffice to
witness them all.   The atomic network defined by restricting to this set of up to $3|At(\c A)|^3$ points is consistent and still witnesses all consistent triples of atoms, hence it defines a qualitative representation of the required size, by the proof of the right to left implication of  Lemma~\ref{lem:net}.
\end{proof}

The upper bound of Lemma~\ref{lem:cubic} seems to overestimate the necessary
size of a qualitative representation rather largely. Although we will not try to provide a
sharper bound here, we will present an illustrative example. Consider 
RCC5: a version of RCC8 with no distinction between ``tangential'' and ``non-tangential''
connectedness. Its composition table is
$$
\begin{tabular}{c|ccccc}
$\comp$     & $\id$     & $\ep$ & $\cv{\ep}$ & $\pi$ & $\de$ \\ 
\hline
$\id$     & $\id$     & $\ep$ & $\cv{\ep}$ & $\pi$ & $\de$ \\ 
$\ep$      & $\ep$      & $\ep$ & $1$ & $\ep\join\pi\join\de $ & $\de$ \\ 
$\cv{\ep}$ & $\cv{\ep}$ & $\id\join\ep\join\cv{\ep}\join\pi$ & $\cv{\ep}$ 
& $\cv{\ep}\join\pi$ & $\cv{\ep}\join\pi\join\de$ \\ 
$\pi$      & $\pi$      & $\ep\join\pi$ & $\cv{\ep}\join\pi\join\de$ & $1$ & $\cv{\ep}\join\pi\join\de$ \\ 
$\de$      & $\de$      & $\ep\join\pi\join\de$ & $\de$ & $\ep\join\pi\join\de$ & $1$ \\  
\end{tabular}
$$
with $\ep$, $\pi$ and $\de$ interpreted intuitively as proper part, proper overlap, and
disjointness relations, respectively. This algebra has a qualitative
representation over a base consisting of just eleven ``regions'', namely, the
following subsets of $\{1,\dots,7\}$:
\begin{align*}
A &= \{1,6\}&  B &= \{1,2,3,5,6\}& C &= \{1,2,6\}\\
D &= \{1,2,3,5,6,7\}& E &= \{1\}& F &= \{1,2,3,5\}\\
G &= \{2,3\}& H &= \{1,4,6\}& I &= \{1,3\}\\
J &= \{1,2,3,4,5\}& K &= \{4,5,6\}
\end{align*}
with the relations $\ep$, $\pi$ and $\de$ mapped, respectively, to proper subset relation,
nonempty symmetric difference relation, and empty intersection relation. 

%
%
%
%
%
%
%
%

We conjecture that our representation on the 11 regions $A, \ldots, K$ is the smallest number of regions possible for a qualitative representation of RCC5.

\begin{remark}
The study of \emph{syllogistics} \cite{prioranalytics}\nb{R3} can be described using RCC5.   Namely, setting:
\begin{align*}
\text{Every } S \text{ is } P &\text{\quad if{}f\quad } (S,P)\in \id+\ep\\
\text{Some } S \text{ is } P &\text{\quad if{}f\quad } (S,P)\in \id+\ep+\cv{\ep}+\pi\\
\text{No } S \text{ is } P &\text{\quad if{}f\quad } (S,P)\in \de\\
\text{Some } S \text{ is not } P &\text{\quad if{}f\quad } (S,P)\in\cv{\ep}+\pi+\de
\end{align*}
we obtain a faithful interpretation of traditional logic of 
\emph{categorical propositions}. 
\end{remark}

We turn to questions of computational complexity.  We say that a finite atom structure has a qualitative representation if its complex algebra has one.
We begin with a result that contrasts with the corresponding question for
strong representations which is known to be undecidable \cite{HH7}. 

\begin{theorem}\label{thm:NP}
The problem of determining whether a finite atom structure has a qualitative
representation is {\bf NP-complete}. 
\end{theorem}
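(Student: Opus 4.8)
For membership in \textbf{NP}, the strategy rests on Lemma~\ref{lem:net} and Lemma~\ref{lem:cubic}. Given a finite atom structure $(X, E, \cv{\ }, C)$, its complex algebra $\c A = \Cm(X, E, \cv{\ }, C)$ has $|At(\c A)| = |X|$ atoms. By Lemma~\ref{lem:net}, $\c A$ is qualitatively representable if and only if there is a consistent atomic network witnessing every consistent triple of atoms, and by Lemma~\ref{lem:cubic} such a network—if it exists—can be taken on a base of at most $3|X|^3$ points. The certificate is therefore the atomic network itself: a labelling $\lambda:(N\times N)\to X$ on a node set $N$ with $|N|\le 3|X|^3$. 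To verify the certificate in polynomial time one checks (i) that $\lambda$ is a consistent atomic network, i.e. the conditions of Definition~\ref{def:network} hold, which for an atomic labelling reduces to checking for each triple $x,y,z$ that $(\lambda(x,y),\lambda(y,z),\lambda(x,z))\in C$, together with the converse and identity conditions; and (ii) that every consistent triple in $C$ is witnessed by some triple of nodes. Both checks run in time polynomial in $|N|$ and $|X|$, hence polynomial in the size of the input atom structure, so the problem lies in \textbf{NP}.

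For \textbf{NP}-hardness, the plan is to reduce a known \textbf{NP}-complete problem—most naturally graph $3$-colourability, given the hint already visible in Example~\ref{examples}.\ref{ex:4} that qualitative representability is bound up with colouring $K_n$ using the atoms $a,b,c$ as colours while forbidding monochromatic triangle-edges. Given a graph $G$, I would construct an atom structure $(X,E,\cv{\ },C)$, computable in polynomial time from $G$, whose complex algebra is qualitatively representable precisely when $G$ is $3$-colourable. The consistency triples in $C$ would be designed so that witnessing all consistent triples forces the existence of a base that encodes a proper colouring of $G$, while any attempt to violate the colouring constraint produces an inconsistent triple and hence blocks the construction of the required witnessing network. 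One must verify via Lemma~\ref{lem:na} that the combinatorial data $(X,E,\cv{\ },C)$ actually arises as the atom structure of a non-associative algebra, i.e. that $C$ is closed under Peircean transforms and that the identity condition holds.

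The main obstacle I anticipate is the hardness direction, specifically the design of the gadget encoding. The difficulty is twofold: the atom structure must be forced to be a \emph{valid} non-associative atom structure (the closure conditions of Lemma~\ref{lem:na} are rigid and must be respected by every triple introduced), and simultaneously the witnessing-of-all-consistent-triples requirement from Lemma~\ref{lem:net} must translate faithfully into the colouring constraint, in \emph{both} directions. The forward direction (a $3$-colouring yields a witnessing network) tends to be routine, but the converse—showing that any qualitative representation of the gadget algebra \emph{compels} a proper $3$-colouring of $G$, with no spurious representations sneaking in via sums of atoms or via the identity when it fails to be atomic—is where the care lies. I would isolate the colouring constraint in a small fixed sub-structure (an analogue of the algebra $25_{65}$ of Example~\ref{examples}.\ref{ex:4}, which cannot be represented on more than four points) and bolt it onto a per-vertex and per-edge encoding of $G$, using subidentity atoms to tag the vertices, then argue that consistency of the witnessing network is equivalent to the absence of a monochromatic edge.
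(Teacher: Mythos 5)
Your membership argument is complete and coincides with the paper's: guess an atomic network on at most $3|At(\c A)|^3$ nodes and verify consistency and the witnessing of every consistent triple, via Lemma~\ref{lem:net} and Lemma~\ref{lem:cubic}. Your hardness plan also targets the same reduction the paper uses (graph $3$-colourability, with the four-point algebra of Example~\ref{examples}.\ref{ex:4} as the colouring gadget and connector atoms tying graph vertices to colours). So the overall route is the paper's route; the issue is that the hardness half remains a plan with two genuine gaps, both of which you partly anticipate but do not resolve.

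First, the forward direction is not routine, contrary to your assessment. Lemma~\ref{lem:net} demands that \emph{every} consistent triple of atoms be witnessed in the network, not merely that the network be consistent. For an arbitrary $3$-colourable $G$, the natural encoding may fail to witness triples such as (non-edge, same colour, same colour) or (non-edge, colour $i$, colour $j$) --- e.g.\ if $G$ is a single triangle, no two vertices share a colour and no non-edge exists. The paper handles this by padding $G$ with isolated vertices so that it contains an independent set of size $5$ and a triangle with exactly one edge, which forces any $4$-colouring to realise both a repeated colour on a non-edge and distinct colours on a non-edge; without some such normalisation your reduction is simply incorrect on small instances. Second, for the converse direction you need a concrete mechanism that forces any witnessing network to contain a faithful copy of $G$: nothing in your sketch prevents a representation from duplicating a vertex, omitting part of the graph, or realising the colouring gadget without ever encoding $G$ at all. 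The paper achieves this by using a separate atom $s_{uv}$ for each directed edge of $G$, adjoining a universal vertex $\infty$ adjacent to all others (so the reduction actually passes through $4$-colourability of $G^\infty$), and imposing type-matching forbidden triples that force the $\y/\n$-labelled edges to form a complete bipartite graph separating graph nodes from colour nodes; the edge $(\infty', v')$ labelled $s_{\infty v}$ then pins down a bijection between the graph part of the network and $V^\infty$. These are the load-bearing ideas of the hardness proof, and they are exactly the ones your proposal defers.
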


\begin{proof}  
If  a finite atom structure with $n$ atoms has a qualitative representation then,
by Lemma~\ref{lem:cubic}, it has a qualitative representation of size at most $3n^3$.  Hence a non-deterministic algorithm may simply construct a base with up to $3n^3$ points and guess an atom between each pair of points, then check to see if the resulting network is consistent and that every triple of atoms is witnessed (see Lemma~\ref{lem:net}).  Since the run-time of this non-deterministic algorithm is bounded \nb{R1} by a polynomial function, we conclude that the qualitative representation problem is in {\bf NP}.

\medskip

For {\bf NP}-hardness, we reduce the \emph{3-colour graph vertex problem} for finite graphs.  Let $G=(V, \mathscr{E})$ be a finite graph with vertices $V$ and directed edges $\mathscr{E}$, where this edge set is symmetric and irreflexive.  Cases where $\E=\varnothing$ are trivially $1$-colourable, so we assume $\E\neq\varnothing$.   If we extend $G$ by adding some isolated vertices to $V$ it will not affect the $3$-colourability of the graph, so we assume that $G$ has an independent set of size $5$ (formally, there are $v_0, v_1, \ldots, v_4\in V$ such that $(v_i, v_j)\not\in E$, for $i, j<5$)\nb{R1} and a triangle containing one edge and two non-edges (there are $u, v, w\in V$ such that $(u, v)\in E$ but $(u, w), (v, w)\not\in E$). Next, we extend $G$ to $G^\infty=(V^\infty, \E^\infty)$, by adding a single node connected to all the nodes of $V$, i.e. $V^\infty=V\cup\set\infty,\; \E^\infty=\E\cup\set{(\infty, u), (u, \infty):u\in V}$, where $\infty\not\in V$.      It is clear that $G^\infty$ is 4-colourable if and only if $G$ is $3$-colourable.   Because $G$ has an independent set of size $5$, any $4$-colouring of $G^\infty$ must include non-adjacent nodes of the same colour, and because of the triangle with one edge and two non-edges any colouring must include non-adjacent nodes of different colours.

We define a non-associative
atom structure $(S, E, \cv{ }, C)$ as follows.    The set of atoms is
\[ S=\set{1'}\cup S_{GG}\cup S_{CC}\cup S_{GC}\cup S_{CG}\]
where $S_{GG}=\set{s_{uv}: (u, v)\in\E^\infty}\cup\set{\g}$ (graph atoms, here $\g$ is a symbol not appearing in $V^\infty$, used for non-edges), \/ $S_{CC}=\set{a, b, c}$ (colouring atoms), \/ $S_{GC}=\set{\y,\n}$ (used to map graph nodes to colours) and $S_{CG}=\set{\cv\y, \cv\n}$ (the converses of $S_{GC}$).  The identity is an atom $E=\set{1'}$, and all atoms are self-converse except 
$\cv{s_{uv}}=s_{vu}$ and the converses of $\y, \n$ are $\cv\y, \cv\n$ respectively.  
[The intention here is that atoms  $s_{uv}$ will be used to encode the edges $(u,v)$ of the graph $G^\infty$, \/  $\g$ (``gap'') corresponds to non-edges and the atoms $a, b, c\in S_{CC}$ will
encode the undirected edges of a graph with no more than four nodes (see Examples~\ref{examples}.\ref{ex:4}) and these nodes will represent distinct colours.
$\sfy,\cv\y$ (``yes'') and $\sfn,\cv\n$ (``no'') will be used to connect this
set of up to four nodes to the nodes of $G^\infty$ while encoding a  legitimate
$4$-colouring.  A schematic of the desired qualitative representation is shown
in Figure \ref{fig:qualrepNP}.\nb{R2,R3.}] 
\begin{figure}\begin{tikzpicture}
\begin{scope}[xscale = 1.1,yslant =0.3]
\node (infty) [vertex] at (0,3) [label = $\infty$]{}; 
\newdimen\rrr
\rrr=1.4cm
\foreach \n in {0,1,2,3,4,5} {
\node (\n) [vertex] at (60*\n:\rrr) [label=60*\n:\n] {};
}
\foreach \n in {1,2,3,4,5} {
    \foreach \m in {0,1,2,3,4,5} {
    \draw [lightgray] (\m) -- (\n);
    }
}
\draw [thick, <->] (0) -- (5);
\draw [thick, <->] (5) -- (4);
\node at (1.5,-.9) {$s_{0,5}$};
\node at (.25,-1.6) {$s_{4,5}$};
\foreach \n in {1,2,4,5} {
\draw [<->] (infty) -- (\n);
}
\draw [bend angle = 20,<->] (infty) [bend left] to (0);
\draw [bend angle = 25,<->] (infty) [bend right] to (3);
\end{scope}
\begin{scope}[xshift = 5cm,yshift = -1cm,xscale = .7,yslant =0.3]
\node[vertex](x) at (0,0) {};
\node[vertex](y) at (3,0) {};
\node[vertex](w) at (1.5,2.25) {};
\node[vertex](z) at (1.5,.75) {};
\draw [thick] (x) -- (y) -- (w) -- (z) -- (y);
\draw [thick] (z) -- (x) -- (w);
\draw [-<] (x) -- (-2,1);
\draw [-<] (x) -- (-2,1.4);
\draw [-<] (x) -- (-2,.6);
\draw (x) -- (-.5,.4);
\draw (x) -- (-.5,0.1);
\draw (x) -- (-.5,0.2);
\draw [-<] (y) -- (-.5,1);
\draw [-<] (y) -- (-.4,1.1);
\draw [-<] (y) -- (-.5,0.8);
\draw (y) -- (2,0.1);
\draw (y) -- (2.5,0.3);
\draw (y) -- (2.5,0.4);
\draw [-<] (z) -- (-3,1.8);
\draw [-<] (z) -- (-3,2.1);
\draw [-<] (z) -- (-3,2.4);
\draw (z) -- (1,1.1);
\draw (z) -- (1,.8);
\draw (z) -- (1,.7);
\draw [-<] (w) -- (-1,2);
\draw [-<] (w) -- (-2,2.6);
\draw [-<] (w) -- (-3,4.2);
\draw (w) -- (1,2.1);
\draw (w) -- (1,2.4);
\draw (w) -- (1,2.25);
\node at (-1,2.7) {$\textsf{n}$};
\node at (-1,2.2) {$\textsf{n}$};
\node at (-1.1,3.6) {$\textsf{y}$};
\node at (1.6,-.2) {$a$};
\node at (2.6,1.2) {$b$};
\node at (.7,1.6) {$c$};
\node at (1.8,1.2) {$a$};
\node at (1.03,.22) {$b$};
\node at (2.03,.62) {$c$};
\end{scope}
\end{tikzpicture}
\caption{A schematic diagram of the intended qualitative representation for $(S,
  E, \cv{ }, C)$, when $G$ is $3$-colourable.  The graph $G^\infty$ is on the
  left, with vertex set $\{0,1,2,3,4,5,\infty\}$ and edges shown in black.  An
  independent set can be found on the five vertices $\{0,1,2,3,4\}$ and a triple
  of vertices with one edge can be found at $\{0,1,5\}$.  The vertex $\infty$ is
  adjacent to all vertices and nonedges are shown in grey (and would be labeled
  by $\mathsf{g}$).  On the right, there is a second tetrahedral graph on $4$
  vertices (with names $\gamma_1,\gamma_2,\gamma_3,\gamma_4$ omitted) and with
  undirected edges $a,b,c$ as in Example~\ref{examples}.\ref{ex:4}.  Edges
  connecting the graph $G^\infty$ to these vertices are labelled $\textsf{y}$ or $\textsf{n}$, according to the colour of the vertex of $G^\infty$.}\label{fig:qualrepNP}
\end{figure}
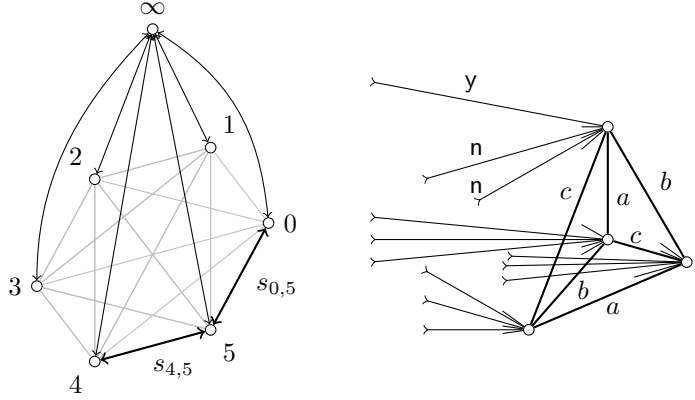

All Peircean transforms of the following triples of atoms are forbidden.
\begin{enumerate}
\renewcommand{\theenumi}{\Roman{enumi}}
\item $(1', x, y)$ where $x\neq y$, (identity law)\label{forb:1'}
\item\label{forb:GC} $(\alpha, \beta, \gamma)$, where $\alpha\in S_{IJ},\;\beta\in S_{J'K'}$ and $\gamma\in S_{I^*K^*}$, unless $I=I^*, \; J=J',\; K'=K^* \;\;(\in\set{G, C})$ \/ (types must match),
\item \label{forb:CC}$(\alpha, \alpha, \beta)$ where $\alpha, \beta\in S_{CC}$, (see Examples~\ref{examples}.\ref{ex:4}, only permutations of the triple $(a, b, c)$ are consistent for $S_{CC}$ atoms),

\item\label{forb:GG} $(s_{uv}, s_{v'w'},\alpha )$ where $(u, v), (v', w')\in\E^\infty$, unless $v=v'$ (node indices must match),
  
\item\label{forb:GG2} $(s_{uv}, s_{vw}, \g)$ where $(u, v), (v, w) \in\E^\infty$ and either $(u, w)\in\E^\infty$ or $u=w$, ($\g$ only allowed on non-edges),
\item\label{forb:GG3} $(s_{\infty u}, \alpha, \g)$ where $u\in V,\;\alpha\in S_{GG}$ ($\infty$ adjacent to all other nodes),
\item\label{forb:1 col}  $(\y, \alpha, \y)$ where $\alpha\in S_{CC}$ (only one colour per node)
\item\label{forb:last} $(s_{uv}, \y, \y)$ where $(u, v)\in \E^\infty$ (adjacent nodes have different colours)
\end{enumerate}
The set $C$ of consistent triples of atoms consists of all triples not forbidden by \eqref{forb:1'}--\eqref{forb:last}, above.

The reduction maps $(V, \mathscr{E})$ to the  atom
structure $(S, E, \cv{ }, C)$ just defined. 
We check that this reduction is correct.  Suppose $\rho$ is a 4-colouring of the
vertices of $G^\infty$ such that adjacent nodes have different colours,  there are non-adjacent nodes of the same colour and non-adjacent nodes of different colours.  Let the
colours be $\gamma_1, \gamma_2, \gamma_3, \gamma_4$.  We define an atomic network $(N, \lambda)$ where
$N=V^\infty\cup\{\gamma_1, \gamma_2, \gamma_3, \gamma_4\}$.  For the labelling $\lambda$ we  
 let $\lambda(x, x)=1'$ (all $x\in N$), \/ $\lambda(u,v)=s_{uv}$ for $(u,v)\in \mathscr{E}^\infty$, \/ $\lambda(u,v)=\g$ if $u\neq v\in V^\infty,\; (u,v)\notin \mathscr{E}$.  
 As in the strong representation for Examples~\ref{examples}.\ref{ex:4}, let $\lambda(\gamma_1, \gamma_2)=\lambda(\gamma_3, \gamma_4)=a,\;\lambda(\gamma_1, \gamma_3)=\lambda(\gamma_2, \gamma_4)=b,\;\lambda(\gamma_1, \gamma_4)=\lambda(\gamma_2, \gamma_3)=c$ and $\lambda(\gamma_j, \gamma_i)=\lambda(\gamma_i, \gamma_j)$ for $1\leq i<j\leq 4$. Finally, for each $u\in V^\infty$ and for $1\leq i\leq 4$, let $\lambda(v,\gamma_i)=\y$ if $\rho(v)=\gamma_i$ else (if $\rho(v)\neq\gamma_i$)\/ $\lambda(v,\gamma_i)=\sfn$.  The atoms $\cv\sfy$ and $\cv\sfn$ are used to label the converse edges for edges labelled $\y, \n$ respectively.  It is a routine check that this defines a consistent atomic network.  Since there are non-adjacent nodes of the same colour the triple $(\g, \y, \y)$ is witnessed and since there are non-adjacent nodes of different colours $(\g, \y, \n)$ and $(\g, \n, \y)$ are also witnessed.  It is easily checked that all other consistent triples of atoms are also witnessed hence, by Lemma~\ref{lem:net}, the complex algebra over $(S, E, \cv{ }, C)$ is qualitatively representable.

For the converse, let $(N, \lambda)$ be any complete, consistent atomic network witnessing  each consistent triple and containing no forbidden triples for the atom structure $(S, E, \cv{ }, C)$: in other words, $(N, \lambda)$ provides a qualitative representation of $(S, E, \cv{ }, C)$.  We show that $G^\infty$ is $4$-colourable, hence $G$ is $3$-colourable.  By forbidden triple \ref{forb:GC}, a triple $(\alpha, \beta, \gamma)$ is forbidden if exactly one or all three of $\alpha, \beta, \gamma$ belongs to $S_{GC}\cup S_{CG}$.  Hence the edges labelled by atoms in $S_{GC}\cup S_{CG}=\set{\y, \cv\y, \n, \cv\n}$ form a complete bipartite graph on $N$, say $N$ is the disjoint union of $N_1$ and $N_2$.  By forbidden triple \ref{forb:GC} again, every edge with source in $N_1$ (say) and target in $N_2$ has a label in $S_{GC}$, every edge with source in $N_2$ and target in $N_1$ has a label in $S_{CG}$,  edges with source and target within $N_1$ have label in $\set{1'}\cup S_{GG}$ and edges with source and target in $N_2$ have label in $\set{1'}\cup S_{CC}$.

Now we show that $|N_1|=|V^\infty|$ and that the labels $s_{uv}$ define a graph on $N_1$ isomorphic to $G^\infty$.  Let $u\in V$ and let $\infty', u'\in N_1$ denote the source  and target of some edge labelled by $s_{\infty u}$.   For each $x\in N_1\setminus\set{\infty'}$ the edge $(\infty', x)$ must be labelled by $s_{\infty v}$ for some $v\in V$ by forbidden triples \ref{forb:GC}, \ref{forb:GG} and \ref{forb:GG3}, and  if $(\infty', y)$ is also labelled by $s_{\infty' v}$ then $x=y$ by forbidden triple \ref{forb:GG2}.  Hence there is a bijection $':V^\infty\rightarrow N_1$ mapping $\infty$ to $\infty'$ and mapping $v\in V$ to the unique $v'\in N_1$ such that $(\infty', v')$ is labelled $s_{\infty v}$.
 So we may assume that $N_1$ is identical to the set $V^\infty$, and that the label of each edge $(\infty,w)$ (for $w\in V$) is $s_{\infty w}$. By forbidden triples \ref{forb:GG} and \ref{forb:GG2} we see that the label of the edge  $(u,v)$ is $s_{uv}$ if $(u,v)\in \mathscr{E}$, \/ $1'$ if $u=v$ and $\g$ otherwise.
By forbidden triple \ref{forb:CC},  $N_2$ cannot have more than four points,  (we saw in Examples~\ref{examples}.\ref{ex:4} that $N_2$ has either three or four points).

Let $u\in N_1$.
As $(\y, \y, \alpha)$ is always forbidden by \ref{forb:1 col}, there is at most one edge labelled $\sfy$ leaving  $u$ and because $(s_{vu},\sfy,\sfn)$ is consistent  and the edge labelled $s_{vu}$ is unique (where $(u, v)\in \E$), there is exactly one edge labelled $\sfy$ leaving $u$.  Thus, we may define a map $\rho:V^\infty\rightarrow N_2$ by letting $\rho(u)$ be the unique element of $N_2$ such $\lambda(u, \rho(u))=\sfy$, for each $u\in V^\infty=N_1$.  Since $(s_{uv}, \sfy, \sfy)$ is forbidden by \ref{forb:last} whenever $(u, v)\in  \mathscr{E}^\infty, \; \rho$ is a valid 4-colouring of $G^\infty$.
\end{proof}

\begin{theorem}
Let $\c A$ be a finite non-associative algebra.  The network qualitative satisfaction problem over $\c A$  is in {\bf NP}.
\end{theorem}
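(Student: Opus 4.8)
The plan is to exploit the bounded-model property already established in Lemmas~\ref{lem:net} and~\ref{lem:cubic}. Fix the finite algebra $\c A$ and set $n=|At(\c A)|$; since $\c A$ is fixed, both $n$ and the constant $3n^3$ are constants, not part of the input. The input is a network $(N,\lambda)$, and we must decide whether it embeds into \emph{some} qualitative representation of $\c A$. The guiding observation is that, by Lemma~\ref{lem:net}, a qualitative representation of $\c A$ is exactly a consistent atomic network witnessing every consistent triple of atoms, and the representation it induces has base the set of $\sim$-classes, where $x\sim y$ iff the atom labelling $(x,y)$ lies below $1'$. So to satisfy $(N,\lambda)$ it suffices to produce such a witnessing network that, restricted to $N$, refines the labels of $\lambda$ down to single atoms.

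The \textbf{NP} procedure is then as follows. Non-deterministically guess, for each ordered pair $(x,y)\in N\times N$, an atom $\mu(x,y)\leq\lambda(x,y)$, setting $\mu(y,x)=\cv{\mu(x,y)}$. Next adjoin a set $W$ of at most $3n^3$ fresh nodes and guess an atom on every remaining edge (those meeting $W$), again respecting converse. Finally, verify in deterministic polynomial time that the atomic network $(N\cup W,\mu)$ is consistent (the triangle condition $(\mu(x,y),\mu(y,z),\mu(x,z))\in C$ for all triples, together with the identity and converse conditions) and that every consistent triple of atoms is witnessed by some triple of nodes; accept iff such a guess checks out. Since $\c A$ is fixed we have $|N\cup W|=|N|+O(1)$, so there are $O(|N|^2)$ labels to guess, $O(|N|^3)$ triangle checks each decidable in constant time, and only $O(1)$ triples to witness (each sought by an $O(|N|^3)$ scan); hence the whole computation is polynomial and the problem lies in \textbf{NP}.

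For soundness, an accepting computation yields a consistent atomic network $(N\cup W,\mu)$ witnessing all consistent triples; by Lemma~\ref{lem:net} this induces a genuine qualitative representation $\phi$ of $\c A$ on the base of $\sim$-classes, and $x\mapsto[x]$ embeds $(N,\lambda)$ into $\phi$, since $([x],[y])\in\mu(x,y)^\phi$ and $\mu(x,y)\leq\lambda(x,y)$ give $([x],[y])\in\lambda(x,y)^\phi$. For completeness, suppose $(N,\lambda)$ embeds via a map $'$ into a qualitative representation over base $D$. Labelling each pair of $D$ by the unique atom containing it gives a consistent atomic network $(D,\mu_D)$ witnessing all triples (Lemma~\ref{lem:net}). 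Pulling $\mu_D$ back along $'$ assigns to each $(x,y)\in N\times N$ the atom $\mu_D(x',y')\leq\lambda(x,y)$, and, exactly as in Lemma~\ref{lem:cubic}, we may select at most $3n^3$ points $W_0\subseteq D$ witnessing all consistent triples; taking $W=W_0$ and labelling the edges meeting $W$ by the corresponding values of $\mu_D$ reproduces an accepting guess.

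The step requiring the most care is completeness, and in particular the interplay between node identification and the bounded witness. Because $\c A$ need not have an atomic identity, distinct nodes of $N$ may have to land on a common point of the representation; this is why we refine $\lambda(x,y)$ to a possibly subidentity atom and rely on the $\sim$-congruence of Lemma~\ref{lem:net} to collapse nodes, rather than insisting the guessed nodes be distinct. The other delicate point is that the constant-size witnessing set $W_0$ must continue to witness all triples after being glued to the arbitrarily large input part: this holds because $W_0$ is carved out of the base $D$ of an \emph{actual} representation, so every triangle among the pulled-back labels is automatically consistent and the subnetwork on $W_0$ inherits the witnessing property. Finally, the case where $\c A$ is not qualitatively representable needs no separate treatment: by Lemma~\ref{lem:net} no consistent atomic network witnesses all triples, so the verification always fails and the algorithm correctly rejects.
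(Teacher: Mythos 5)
Your proposal is correct and follows essentially the same route as the paper: augment the input network with a constant-size set of fresh nodes to witness all consistent triples of atoms, non-deterministically refine every edge label of $N$ to an atom below it, guess the remaining labels, and invoke Lemma~\ref{lem:net} for correctness in both directions. The only cosmetic difference is that the paper pre-labels a disjoint three-node triangle for each consistent triple (so witnessing holds by construction), whereas you guess the labels on the fresh nodes and verify witnessing afterwards; both yield the same polynomial bound.
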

\begin{proof}
For each consistent triple of atoms $t$, let $(T_t, \lambda_t)$ be a partially labelled atomic network with three nodes,  witnessing the triple of atoms.  Assume for distinct consistent triples $t, s$ that $T_t\cap T_s=\varnothing$.
Given a network $(N, \lambda)$, take the disjoint union of $(N, \lambda)$ and the disjoint partial triangles $(T_t, \lambda_t)$ as $t$ ranges over consistent triples of atoms.  Then, non-deterministically guess all unlabelled edges and for each edge $(x, y)$ of $N$, guess an atom below $\lambda(x, y)$.  Finally check that the resulting atomic network is consistent. 
By  Lemma~\ref{lem:net}, this correctly tests qualitative satisfiability and runs non-deterministically in polynomial time.
\end{proof}
A traditional approach to solving the network satisfaction problem (for strong representations) is to refine a given network  to a \emph{path consistent} network  on the same nodes.  This path-consistent refinement may be computed deterministically in cubic time.  If the refined network is inconsistent (has a zero label) then the original network is unsatisfiable.  There are some algebras where the converse also holds --- if the refined network is consistent then the original network must be satisfiable --- for example, path-consistency entails satisfiability for networks over the \emph{Point Algebra} \cite[Theorem 5]{VK89}.  However, for many relation algebras the converse fails, for example, there are known path-consistent but unsatisfiable networks over the Allen Interval Algebra \cite[Figure~5]{All83}.   The Allen Interval algebra possesses a \emph{universal representation} (see Definition~\ref{def:network}), so the satisfiability of a network may be tested along the lines of the proof of the preceding theorem, by non-deterministically picking an atom below the label of each edge of the network and checking the consistency of the resulting atomic refinement.   Such satisfiability checkers may run faster if a network is first refined to a path-consistent network, before the non-deterministic choice of atoms is made.
 For algebras which do not have universal representations however, it is not sufficient to find a consistent atomic refinement of a network, for example, there are consistent atomic networks over the interval-with-duration calculus INDU, not satisfiable in any strong representation (see our remarks in Section~\ref{sec:1.1}).  Indeed there are finite  relation algebras for which the network satisfaction problem is undecidable \cite{Hir:undec}.
  
Nevertheless, for qualitative representations the non-deterministic algorithm given in the preceding theorem is necessary and sufficient for qualitative representability\nb{R3}.  Hence in general the network qualitative satisfaction problem can have much lower complexity than the network satisfaction problem.

\begin{theorem}
The problem of determining whether an equation is valid over qualitative representations is {\bf co-NP-complete}.
\end{theorem}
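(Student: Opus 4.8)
The plan is to prove the two bounds separately, reading an equation $\sigma=\tau$ as failing over \QRA\ exactly when one of the two single-term inequations $\mu\neq 0$ is satisfiable over \QRA, where $\mu$ is $\sigma\cdot(-\tau)$ or $\tau\cdot(-\sigma)$. Thus the set of valid equations lies in \textbf{co-NP} if and only if satisfiability of a single inequation $\mu\neq 0$ over \QRA\ lies in \textbf{NP}, and for the lower bound I would exhibit a reduction from Boolean tautology. The positive direction is where the real work sits; the negative direction is almost immediate once the right witnessing algebra is identified.

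For membership I would have a nondeterministic machine certify satisfiability of $\mu\neq 0$ as follows. It guesses a finite atom structure $\mathcal{T}$ on polynomially many atoms, a valuation $x_i\mapsto X_i\subseteq At(\mathcal{T})$, and — to certify $\Cm(\mathcal{T})\in\QRA$ — a consistent atomic network over $\mathcal{T}$ on at most $3|At(\mathcal{T})|^3$ points witnessing every consistent triple of $\mathcal{T}$; by Lemma~\ref{lem:net}, together with the size bound behind Lemma~\ref{lem:cubic}, such a network exists precisely when $\mathcal{T}$ is qualitatively representable. It then evaluates $\mu$ in $\Cm(\mathcal{T})$ under the guessed valuation, each weak composition being read off the table of $\mathcal{T}$ (equivalently tested against the network by Lemma~\ref{lem:xyz}), and accepts iff $\mu\neq 0$. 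Soundness is clear. The crux is completeness, i.e.\ a bounded-model property: if $\mu\neq 0$ holds in some $\c A\in\QRA$ under some valuation, it already holds in some $\Cm(\mathcal{T})$ with $|At(\mathcal{T})|$ polynomial in $|\mu|$.

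To secure the bound I would fix a qualitative representation $\phi$ of $\c A$ and a pair $(p_0,q_0)\in\mu^\phi$ witnessing the failure, and unfold this membership down the term tree. Variable, Boolean, converse and $\id$ nodes impose only local constraints on the types of the reached pairs; a positive disjunction commits to the true disjunct as dictated by $\phi$; and a positive composition node $\nu_1\comp\nu_2$ at a pair is discharged, via Lemma~\ref{lem:xyz}, by choosing $p^*,z^*,q^*$ in $\phi$ with $(p^*,q^*)$ of the same atom as the current pair and $(p^*,z^*)\in\nu_1^\phi$, $(z^*,q^*)\in\nu_2^\phi$, recursing on the two smaller goals. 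Since the parse tree is a tree each subterm occurrence is processed once, so only $O(|\mu|)$ new points arise; the atoms appearing on the induced network, plus the $\le|At|^3$ further triangles copied from $\phi$ to witness the remaining consistent triples, keep $|At(\mathcal{T})|$ polynomial. The main obstacle is then faithfulness of this finite model, and it is precisely the non-local character of weak composition that makes it delicate: a composite subterm may evaluate differently in $\mathcal{T}$ than in $\c A$, because a composition witnessed in $\c A$ only through atoms absent from $\mathcal{T}$ can, under an odd number of negations, flip a negative composition goal the wrong way. I would prove faithfulness by simultaneous induction on subterms and on the polarity carried by the unfolding, the critical case being a negative goal $(a,b)\notin(\nu_1\comp\nu_2)^\phi$, where one must check that the retained triangles create no spurious witness for the type of $(a,b)$. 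This is arranged by retaining from $\phi$ exactly the triangles demanded by the positive goals (together with those forced by closure under Peircean transforms and by the identity law), so that the consistent triples of $\mathcal{T}$ form a sub-collection of those of $\c A$ on the retained atoms; as the original representation already falsifies the negative goals, no retained triangle can witness them. Pushing this induction through while keeping the point count polynomial is the heart of the argument.

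Finally, for \textbf{co-NP}-hardness I would reduce the Boolean tautology problem. Reading a Boolean term $t(p_1,\dots,p_n)$ (built from $+,-,0,1$) as a term over the signature of \NA, I claim $t=1$ is valid over \QRA\ if and only if $t$ is a tautology. The reverse direction holds because a tautology is valid in every Boolean algebra, in particular in the Boolean reduct of every member of \QRA. The forward direction holds because the two-element algebra with a single self-converse atom equal to $1$ and $\id=1$, which is qualitatively representable on one point, has the two-element Boolean algebra as its reduct; a $0/1$ assignment falsifying $t$ therefore falsifies $t=1$ in a member of \QRA. Since tautology testing is \textbf{co-NP}-complete and the translation is trivially polynomial, this yields \textbf{co-NP}-hardness and completes the classification.
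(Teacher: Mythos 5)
Your overall architecture --- a polynomial-size falsifying model plus guess-and-check for the upper bound, and a reduction from propositional validity for the lower bound --- matches the paper's, and your hardness half is correct and in fact more explicit than the paper's one-line remark: the one-point herd does realise the two-element Boolean algebra inside \QRA. But your route to the small-model property is genuinely different and, as you yourself concede, incomplete at its crux. The paper never rebuilds an abstract atom structure: it takes the herd $\c S$ in which $t=s$ fails at a pair $(x,y)$, forms a finite subset $X$ of the base containing $x,y$ together with a witnessing triple for each composition subterm, and passes to the restricted herd $\c S\restr{X}=\set{r\cap(X\times X):r\in\c S}$. This is automatically a finite herd, so there is nothing to certify about qualitative representability, no atoms to choose, and no difficulty with non-atomic members of \QRA; the certificate is just the finite base with the relations interpreting the variables, and the remaining work is an induction via Lemma~\ref{lem:xyz} showing subterm values are preserved under restriction. (Your instinct to choose each witnessing triangle with the same type as the current target pair is a sound one, and is exactly what makes the positive composition cases go through; the paper buries this in its ``simple induction''.)

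The genuine gap is the faithfulness induction you defer with ``pushing this induction through \dots is the heart of the argument.'' Concretely: to preserve a negative composition goal $\alpha\not\leq(\nu_1\comp\nu_2)^{\mathcal T}$ it is not enough that the retained triples form a subfamily of the consistent triples of $\c A$; you also need the \emph{global} containment $\nu_i^{\mathcal T}\subseteq\nu_i^{\c A}$ over all retained atoms, not merely agreement at the designated goal atoms, since a spurious witness could arise from atoms that were never goals. When $\nu_i$ sits under an even number of negations and itself contains a composition, that global containment flips to a global $\supseteq$ for the inner composition, which forces you to retain a witnessing triangle for \emph{every} retained atom below it, not just the one demanded by the positive goal; each such triangle introduces new atoms, which may again lie below positively occurring compositions, and your sketch offers no argument that this closes off within polynomially many points. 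You also need to say what ``the same atom as the current pair'' means when $\c A$ is not atomic, since members of \QRA\ need not be (here one must work with the types of pairs relative to the finitely many relevant relations, which is in effect what the paper's restricted herd does for free). Neither obstacle is obviously fatal, but neither is discharged, and the paper's device of shrinking the base of the herd rather than manufacturing a new algebra is precisely what lets it avoid both.
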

\begin{proof}
Let $t(\bar x)=s(\bar x)$ be an equation that fails in some herd $\c S$, where $\bar x=(x_0, \ldots, x_{k-1})$ is a finite tuple of variables. Then $t(\bar a)^{\c S}\neq s(\bar a)^{\c S}$ holds for some tuple $\bar a=(a_0, \ldots, a_{k-1})$ of relations in the herd $\c S$ and there is a pair of points $x, y$ from the base of $\c S$ such that $(x, y)$ belongs to one but not the other of  $t(\bar a)^{\c S}$ and $s(\bar a)^{\c S}$.  Let $X$ be a finite subset of the base of $\c S$ including $x$ and $y$ and also including, for each subterm $p;q$ occuring in either $s$ or $t$, three points $u, v, w$ such that $(u, v)\in p$ and $(v, w)\in q$, provided such points exist in the base of $\c S$.  The size of $X$ is at most three times the length of the equation $t(\bar x)=s(\bar x)$.  Let $\c S\restr{X}$ be the herd on the base $X$ consisting of the relations $\set{r\cap(X\times X):r\in\c S}$.  Lemma~\ref{lem:xyz} and a simple induction shows that $(u, v)\in p(a_0, \ldots, a_{k-1})^{\c S}\iff (u, v)\in p(a_0\cap(X\times X),\ldots, a_{k-1}\cap(X\times X))^{\c S\restr{X}}$, for any $u, v\in X$ and any subterm $p(\bar x)$ of either $t(\bar x)$ or $s(\bar x)$.  Hence $(x, y)$ belongs to one but not the other of $t(a_0\cap(X\times X), \ldots, a_{k-1}\cap(X\times X))^{\c S\restr X},  \; s(a_0\cap(X\times X), \ldots, a_{k-1}\cap(X\times X))^{\c S\restr X}$  and the equations $t(\bar x)=s(\bar x)$ fails in a herd on a base of size at most three times the length of the equation.  Thus, the failure of the equation may be tested non-deterministically by choosing a base of size at most three times the length of the equation, guessing which pairs of points belong to each of the relations $x_0, \ldots, x_{k-1}$ and verifying that the equation fails.  This proves that the validity of equations problem is {\bf co-NP}.

Validity is {\bf co-NP-hard} because the validity problem for propositional formulas reduces to it.
\end{proof}
The reader has probably guessed, or knows  already, that the corresponding problem for strongly representable relation algebras is much harder: the equational theory of strongly representable relation algebras is undecidable \cite{Tar41}.

\section{Feeble representations}\label{sec:feeble}
We mentioned earlier that \cite[Definition 3]{LR04} only requires $(a;b)^\phi\supseteq a^\phi\cp b^\phi$ in their definition of weak composition, and do not insist on minimality subject to that (although elsewhere in their paper they imply a definition which accords with our definition of qualitative representation, see Definition~\ref{qualrep}.\ref{en:abc}).  In this section we investigate this weaker type of representability.  Although\nb{R1} qualitative representations are of greater relevance to much of the literature in qualitative calculi, feeble representations are more directly related to research on the binary constraint satisfaction problem, where for any constraints $a, b$ and a solution for variables  $x, y, z$,  we may infer from $a(x, y), \; b(y, z)$ that $(a\circ b)(x, z)$ holds, but there is no requirement, given $(a\circ b)(x, z)$, that there should be three variables $x', y', z'$ such that $a(x', y')$ and $b(y', z')$.

Let $\c A$ be a non-associative algebra and let $\phi:\c A\rightarrow\wp(D\times D)$ (some base $D$) respect the boolean operators, represent $1'$ as the identity and respect the converse operator.  If $(a;b)^\phi\supseteq a^\phi\cp b^\phi$ (for all $a, b\in\c A$) then $\phi$ is called a feeble qualitative representation, or simply a feeble representation, of $\c A$.
The proof of the following lemma is very similar to the proof of Lemma~\ref{lem:net}.
\begin{lemma}\label{lem:feeble}
Let $\c A$ be a finite non-associative algebra. $\c A$ has a feeble representation if and only if there is a consistent atomic network $(N, \lambda)$ over $\c A$ such that for every atom $a$ of $\c A$ there are $x, y\in N$ with $\lambda(x, y)=a$.
\end{lemma}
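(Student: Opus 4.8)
The plan is to follow the proof of Lemma~\ref{lem:net} almost verbatim, the only difference being that feebleness asks solely for the inclusion $(a\comp b)^\phi\supseteq a^\phi\cp b^\phi$ rather than for $a\comp b$ to be the least such element; consequently witnessing every atom (rather than every consistent triple) turns out to be exactly what is needed. For the forward implication, suppose $\phi$ is a feeble representation of $\c A$ on base $D$. Since $\c A$ is finite and atomic and $\phi$ is injective, the nonempty relations $\set{a^\phi : a\in At(\c A)}$ partition $D\times D$, so each pair $(x,y)$ lies in $a^\phi$ for a unique atom, which I call $\lambda(x,y)$. First I would check that $(D,\lambda)$ is a consistent atomic network: conditions (a), (c), (d) of Definition~\ref{def:network} are immediate from $(1')^\phi=\Id_D$, from $\phi$ respecting converse, and from labels being nonzero atoms, while condition (b) holds because $(x,y)\in\lambda(x,y)^\phi$ and $(y,z)\in\lambda(y,z)^\phi$ force $(x,z)\in\lambda(x,y)^\phi\cp\lambda(y,z)^\phi\subseteq(\lambda(x,y)\comp\lambda(y,z))^\phi$, so this relation meets $\lambda(x,z)^\phi$. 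Injectivity gives $a^\phi\neq\varnothing$ for every atom $a$, so $a$ labels at least one edge; restricting $\lambda$ to a finite $D_0\subseteq D$ retaining one witnessing pair per atom leaves a consistent atomic network (the conditions of Definition~\ref{def:network} are inherited by subsets) that witnesses every atom.

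For the converse, suppose $(N,\lambda)$ is a consistent atomic network witnessing every atom. Exactly as in Lemma~\ref{lem:net} I would set $x\sim y\iff\lambda(x,y)\leq 1'$ and verify, using consistency conditions (b) and (c) together with $1'\comp 1'=1'$ and monotonicity of $\comp$, that $\sim$ is a congruence; then define $a^\phi=\set{([x],[y]):\lambda(x,y)\leq a}$ on the set of $\sim$-classes. That $\phi$ respects the boolean operators, $1'$ and converse is checked as in Lemma~\ref{lem:net} (edges are atom-labelled, $x\sim y\iff[x]=[y]$, and consistency (c) yields $\cv{\lambda(x,y)}=\lambda(y,x)$). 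The one point genuinely tied to feebleness is the inclusion: if $([x],[z])\in a^\phi\cp b^\phi$ then there is $[y]$ with $\lambda(x,y)\leq a$ and $\lambda(y,z)\leq b$, and consistency (b) forces the atom $\lambda(x,z)$ below $\lambda(x,y)\comp\lambda(y,z)\leq a\comp b$, so $([x],[z])\in(a\comp b)^\phi$ and hence $(a\comp b)^\phi\supseteq a^\phi\cp b^\phi$. Finally, the hypothesis that every atom is witnessed gives $a^\phi\neq\varnothing$ for each atom, so $\phi$ is injective and is a genuine feeble representation.

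The step needing the most care is pinning down the contrast with Lemma~\ref{lem:net}: here I only ever need the inclusion $\supseteq$, which follows from consistency (b) and monotonicity of $\comp$ alone, whereas the qualitative case additionally needs $a\comp b$ to be minimal, and that minimality is precisely what forces the witnessing of all consistent triples there. Making clear why nonemptiness of every atom --- equivalently, injectivity of $\phi$ --- is the single extra ingredient supplied by the atom-witnessing hypothesis is the crux; the congruence check and the verification that $\phi$ respects the non-compositional operations are routine and identical to the earlier lemma.
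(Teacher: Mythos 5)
Your proof is correct and takes exactly the route the paper intends: the paper gives no separate argument for this lemma, saying only that the proof is very similar to that of Lemma~\ref{lem:net}, and your write-up is precisely that adaptation (witnessing single atoms instead of consistent triples, with only the inclusion $(a\comp b)^\phi\supseteq a^\phi\cp b^\phi$ to verify in the converse direction). Your closing remark correctly identifies why minimality of $a\comp b$ is what forces triple-witnessing in the qualitative case and is not needed here.
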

In a feeble representation of a finite algebra, instead of 
requiring a witness for each consistent triple of atoms, we only require a 
witness for each each single atom, while avoiding any forbidden triple of atoms.

One obvious shortcoming of this notion is that a feeble representation 
of an algebra $\c A$ does not determine $\c A$, because some consistent 
triples may be absent in the representation.   Indeed, we saw that the herd on a base of two points shown in the second part of Example~\ref{examples}.\ref{2-ex} provides a feeble representation of two non-isomorphic non-associative algebras: the algebra of Example~\ref{examples}.\ref{2-ex} and  the algebra of Example~\ref{examples}.\ref{ex:ngqr}.    Unfortunately, the complexity of the feeble representation problem remains the same as that of the qualitative representation problem.

\begin{theorem}
The problem of determining whether a finite atom structure has a feeble representation is {\bf NP-complete}.
\end{theorem}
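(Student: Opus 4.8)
The plan is to mirror the structure of the proof of Theorem~\ref{thm:NP}, since feeble and qualitative representability differ only in the witnessing requirement (witness every atom versus witness every consistent triple), and the complexity of the two problems turns out to coincide. Membership in \textbf{NP} is essentially immediate: by Lemma~\ref{lem:feeble}, a finite atom structure with $n$ atoms has a feeble representation iff there is a consistent atomic network witnessing every single atom. A witness for one atom needs only two nodes, so $2n$ nodes suffice to witness all atoms simultaneously. Hence a nondeterministic algorithm guesses a base of at most $2n$ points, guesses an atom label on each ordered pair, and then verifies in polynomial time that the resulting network is consistent (no forbidden triple occurs) and that every atom appears as a label. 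This runs in nondeterministic polynomial time, giving the upper bound.

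The substance of the proof is \textbf{NP}-hardness, and here I would again reduce from $3$-colourability of finite graphs, reusing the atom structure $(S, E, \cv{\ }, C)$ constructed in the proof of Theorem~\ref{thm:NP} with only the modifications forced by the weaker witnessing condition. The key observation is that in the forward direction of that proof, the network built from a $4$-colouring of $G^\infty$ is consistent and witnesses \emph{all} consistent triples, so in particular it witnesses every atom; thus if $G$ is $3$-colourable then the atom structure is feebly representable as well. The delicate direction is the converse: I must show that a consistent atomic network witnessing merely every \emph{atom} (not every triple) still forces a $4$-colouring of $G^\infty$. The crucial point is that the argument recovering the colouring in Theorem~\ref{thm:NP} relied almost entirely on the \emph{forbidden} triples (the identity law, type-matching \ref{forb:GC}, the $S_{CC}$ constraints \ref{forb:CC}, the edge-matching \ref{forb:GG}--\ref{forb:GG3}, and the colouring constraints \ref{forb:1 col}--\ref{forb:last}), together with the bare existence of labels rather than on witnessing all triples. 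A feeble representation, being a consistent atomic network that avoids all forbidden triples, still has the bipartite splitting $N = N_1 \sqcup N_2$, still identifies $N_1$ with $V^\infty$ via the $s_{\infty u}$ edges, still bounds $|N_2|\le 4$, and still produces the map $\rho(u) = $ the unique target of a $\sfy$-edge from $u$, which forbidden triple \ref{forb:last} forces to be a proper $4$-colouring.

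The main obstacle, and the place where I would need to be careful, is ensuring that the atoms actually \emph{are} all witnessed by any feeble representation, since hardness requires that feeble representability be equivalent to $3$-colourability and not merely implied by it. In particular I must confirm that the existence of the atoms $s_{uv}$, $\g$, $a,b,c$, $\sfy,\sfn$ and their converses as labels does not inadvertently make the atom structure feebly representable even when $G$ is \emph{not} $3$-colourable. The argument in the previous paragraph addresses this: the recovery of the colouring goes through using only consistency (avoidance of forbidden triples) plus the fact that the uniqueness and existence of the relevant edge labels follow from consistency and witnessing of individual atoms, so a feeble representation cannot exist unless a valid $4$-colouring of $G^\infty$ does. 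I would also double-check that no atom is rendered unwitnessable by the forbidden triples alone (so that the trivial obstruction to feeble representability never arises independently of colourability); a short inspection of the triples $\eqref{forb:1'}$--$\eqref{forb:last}$ confirms each atom can legitimately appear. With these points verified, the reduction is the same polynomial-time map $(V,\mathscr{E})\mapsto (S,E,\cv{\ },C)$ as before, and the theorem follows.
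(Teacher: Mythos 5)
Your \textbf{NP} membership argument is fine and is essentially the paper's. The gap is in the hardness reduction. You propose to reuse the $3$-colourability atom structure from the proof of Theorem~\ref{thm:NP} and claim that the recovery of a $4$-colouring uses only the forbidden triples together with witnessing of individual atoms. That is not so. In the converse direction of that proof, the step establishing that \emph{every} $u\in N_1$ has an outgoing edge labelled $\sfy$ (i.e.\ that the colouring map $\rho$ is total) is derived from the fact that the consistent triple $(s_{vu},\sfy,\sfn)$ must be \emph{witnessed} and that the edge labelled $s_{vu}$ is unique; this forces a $\sfy$-edge out of $u$. Witnessing of consistent triples is exactly what a feeble representation does not provide. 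Concretely, take $N_1=V^\infty$ labelled by the $s_{uv}$ and $\g$ as dictated by $G^\infty$, take $N_2$ to be a four-point copy of the algebra of Examples~\ref{examples}.\ref{ex:4}, label a single cross edge by $\sfy$ and every other edge from $N_1$ to $N_2$ by $\sfn$. Triples such as $(\g,\sfn,\sfn)$, $(s_{uv},\sfn,\sfn)$, $(s_{uv},\sfy,\sfn)$, $(\sfn,a,\sfn)$ and $(\sfn,\cv{\sfn},\g)$ are all consistent, and the only colouring constraints, namely $(\sfy,\alpha,\sfy)$ and $(s_{uv},\sfy,\sfy)$, involve two $\sfy$-edges and so are never violated. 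This network is consistent and witnesses every atom \emph{whether or not $G$ is $3$-colourable}, so under your reduction every input graph maps to a feebly representable atom structure and nothing is decided.

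This is why the paper does not reuse that construction but instead reduces from the \emph{Monochromatic Triangle} problem, with an atom structure containing one atom $c_{uv}$ for each pair (colour, edge). There the mere requirement that every atom be witnessed already forces a total colouring of the edge set: each $\r_{uv}$ and each $\sfb_{uv}$ must appear somewhere, the auxiliary atoms $\p_u,\q_u$ and the forbidden triples force these witnesses to assemble into two copies of $G$ carrying complementary colourings, and the forbidden monochromatic triple $(c_{uv},c_{vw},c_{uw})$ makes each copy a proper $2$-colouring. The design principle is the one your proposal is missing: for feeble representability you may only rely on ``each atom occurs at least once, consistently,'' so the combinatorial object to be recovered must be encoded in the atoms themselves rather than in which consistent triples are realised. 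To repair your approach you would have to redesign the atom structure along these lines (or prove the totality of $\rho$ from consistency alone, which, as the counterexample shows, is impossible for the structure as given).
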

\begin{proof}
If a finite atom structure with $n$ atoms has a feeble representation then we may restrict the base to a set of at most $2n$ points so that all atoms are still witnessed in the restriction, hence the atom structure has a feeble representation of size at most $2n$.  Thus, a non-deterministic algorithm may guess an atomic labelling over a set of at most $2n$ points and check if the labelling defines a feeble representation.  So the feeble representation problem is in ${\bf NP}$.

For {\bf NP}-hardness, we reduce the \emph{Monochromatic Triangle} problem for
finite graphs.  Let $G=(V, \mathscr{E})$ be a finite graph with vertices $V$ and
directed, irreflexive, symmetric edges $\mathscr{E}$ (i.e. $(u,v)\in\E\iff (v,u)\in\E$).  The \emph{Monochromatic Triangle} problem asks if
there is a $2$-colouring of the edges, that is, a symmetric function $\rho:\E\rightarrow\set{\r, \sfb}$ such that $\rho$ is not constant over the three edges of any triangle of the graph.  This problem is
known to be {\bf NP-complete}, see \cite[p.~191]{Garey:1979:CIG:578533}.   Any complete graph with six or more nodes is a no instance, hence we will only consider instances of this problem containing at least one non-edge.

  Given a symmetric, irreflexive graph $G=(V,\E)$, we define a non-associative
atom structure $(S, E, \cv{ }, C)$ as follows.   
\[
 S=\set{1'}\cup\set{\x}\cup S_{GG}\cup S_{\infty G}\cup S_{G\infty}\]
where $S_{GG}=\set{c_{uv}:c\in\set{\r, \sfb},\; (u, v)\in \E}\cup\set{\g},\; S_{\infty G}=\set{\p_u, \q_u:u\in V}$ and $S_{G \infty}=\set{\cv\p_u, \cv\q_u:u\in V}$. The identity is an atom $E=\set{1'}$ and converse is defined by $\cv{c_{uv}}=c_{vu}$ (for any $c\in\set{\r, \sfb}$ and $(u,v)\in\E$) ,  the converses of $\p_u, \q_u$ are $\cv \p_u, \cv\q_u$ and $1', \x,\g$ are  self-converse.  
Let $C$ consist of all triples of atoms except for Peircean transforms of the following forbidden triples of atoms.  
\begin{enumerate}
\renewcommand{\theenumi}{\Roman{enumi}}
\item $(1', x, y)$, where $x\neq y$,
\item\label{forb:y} $(\x, \x, \x)$ and $(\x, \alpha, \beta)$, where $\x\not\in\set{\alpha, \beta}$,
\item \label{forb:S} $(\alpha, \beta, \gamma)$ where $\alpha\in S_{IJ},\;\beta\in S_{J'K}$ and any atom $\gamma$, unless $J=J'$,
\item \label{forb:indices2}$(\p_u, c_{u_1v}, \p_{v_1}),\; (\q_u, c_{u_1v}, \q_v)$, any $c\in\set{\r, \sfb}$ any $u, u_1, v, v_1\in V$, unless $u=u_1$ and $v=v_1$,
\item\label{forb:gaps} $(\p_u, \g, \p_v),\; (\q_u,\g, \q_v)$, where $(u, v)\in\E$ or $u=v$,
\item\label{forb:pq} $(\p_u, \alpha, \q_v)$, for any atom $\alpha$ and any $u, v\in V$,
\item\label{forb:no triangle} $(c_{uv}, c_{vw}, c_{uw})$, any $c\in\set{\r, \sfb}$ and any $u, v, w\in V$.
\end{enumerate}
Note that the set of atoms does not include a subset $S_{\infty\infty}$, so  triples \ref{forb:S} and \ref{forb:y} forbid all triples of non-identity atoms of the form $(\alpha, \beta, \gamma)$ where $\alpha\in S_{\infty G}$ and $\beta\in S_{G \infty}$, thus $\p_u;\cv\p_v$ equals $1'$ if $u=v$ else it is zero, for all $u, v\in V$.  Also, there are no atoms $c_{uu}$ for $c\in\set{\r, \sfb}$ since $(u, u)\not\in\E$, so by forbidden triples \ref{forb:y},  \ref{forb:S} \ref{forb:indices2}, \ref{forb:gaps} we have $\cv\p_u;\p_u=1'$.  Similarly $\q_u;\cv\q_v\leq 1'$ and $\cv\q_u;\q_u=1'$.

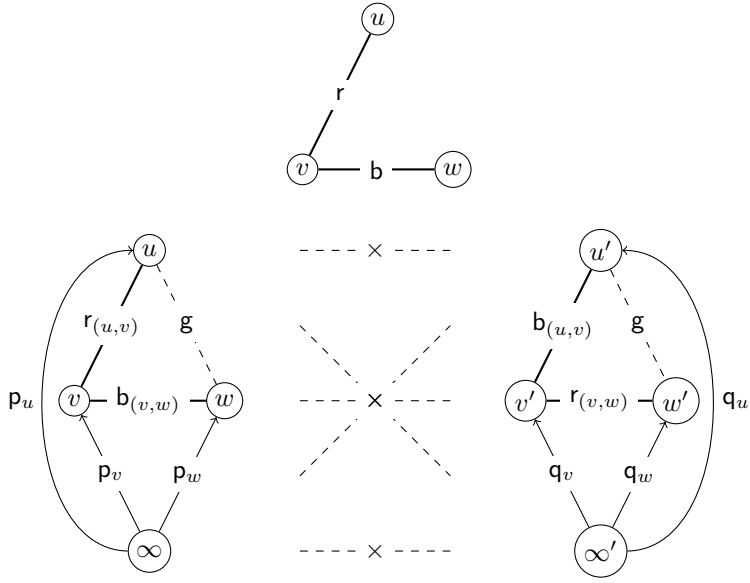
\begin{figure}\begin{tikzpicture}
\node[vertex](u) at (1,2) {$\;u\;$};
\node[vertex](v) at (0,0) {$\;v\;$};
\node[vertex](w) at (2,0) {$\;w\;$};
\draw [thick] (u) -- node[fill=white] {$\r$} (v) --node[fill=white]{$\sfb$} (w);
\end{tikzpicture}\\ \vspace{.2in}
\begin{tikzpicture}
\node[vertex](u) at (1,2) {$\;u\;$};
\node[vertex](v) at (0,0) {$\;v\;$};
\node[vertex](w) at (2,0) {$\;w\;$};
\node[vertex](i) at (1, -2){$\;\infty\;$};
\draw [thick] (u) -- node[fill=white] {$\r_{(u,v)}$} (v) --node[fill=white]{$\sfb_{(v,w)}$} (w);
\node[vertex](u') at (7,2) {$\;u'\;$};
\node[vertex](v') at (6,0) {$\;v'\;$};
\node[vertex](w') at (8,0) {$\;w'\;$};
\node[vertex](i') at (7, -2){$\;\infty'\;$};
\draw [thick] (u') --node[fill=white] {$\sfb_{(u,v)}$} (v') --node[fill=white]{$\r_{(v,w)}$} (w');
\draw[->](i)--node[fill=white]{$\p_v$}(v);
\draw[->](i)--node[fill=white]{$\p_w$}(w);
\draw[dashed] (u) --node[fill=white]{$\g$}(w);
\draw[dashed] (u') --node[fill=white]{$\g$}(w');
\draw[->](i')--node[fill=white]{$\q_v$}(v');
\draw[->](i')--node[fill=white]{$\q_w$}(w');

\draw[dashed](3,-1) -- (5,1);
\draw[dashed](3,1) -- (5,-1);
\draw[dashed](3,0) --node[fill=white]{$\times$} (5,0);
\node at (4,0) {$\times$};

\draw[bend angle = 90,->](i)[bend left] to (u);
\node at (-.7,0) {$\p_u$};

\draw[dashed] (3,-2)--node[fill=white]{$\times$}(5,-2);
\draw[dashed] (3,2)--node[fill=white]{$\times$}(5,2);
\draw[bend angle = 90,->](i')[bend right] to (u');
\node at (8.8,0) {$\q_u$};
\end{tikzpicture}\\
\caption{\label{fig:feeble}A 2-colouring of the edges $(u, v), (v, w)$ of a graph with nodes $u, v, w$ (above) and a consistent atomic network witnessing all atoms in $(S, E, \cv{}, C)$, obtained from it (below).  The atomic network determines a feeble representation.}
\end{figure}

The reduction maps $(V, \mathscr{E})$ to the complex algebra of the atom
structure $(S, E, \cv{ }, C)$ just defined.
Before we prove the correctness of this reduction, we mention the intended roles of the atoms.
Given any consistent, atomic network witnessing all atoms, the atom $\x$ will define a bipartite edge relation over the set of nodes of the network (as in Examples~\ref{examples}.\ref{3-ex}).  Each part of the network contains a copy of the graph together with a single auxiliary node and will encode a valid colouring of the graph, with atoms $\sfr_{uv}$ and $\sfb_{uv}$ 
 encoding edges coloured red and blue, respectively, while $\sfg$ (``gap'') corresponds to
non-edges.    The atom $\p_u$ labels the edge from the auxiliary node to $u$ in one part of the network while the atom $\q_u$ is used in the other part.  They are used to
ensure that the encoding describes the way edges fit together correctly.   Conversely, given an arbitrary edge colouring of $G$ there is dual colouring obtained by swapping red and blue edge labels.  By using the colouring on one copy of $G$ and the dual colouring on another copy of $G$ we may construct a consistent, atomic network in which each atom $\r_{uv}$ and $\sfb_{uv}$ is witnessed.  Such an atomic network is illustrated in Figure~\ref{fig:feeble}\nb{R3}, based on a two colouring of a three node graph.

Now we make this more precise by checking that the reduction is correct.  Suppose $\rho:\E\rightarrow\set{\r, \sfb}$ is a symmetric colouring avoiding monochrome triangles.
Let $V_+$ consist of the nodes of $V$ together with a single extra point $\infty$ and let $V_+'=\set{x':x\in V_+}$ be a set of the same size as $V_+$ disjoint from it.
 We will define an atomic network $(V_+\cup V_+' , \; \lambda)$ avoiding all forbidden triples (so consistent) and witnessing every atom, thereby showing that the complex algebra over $(S, E, \cv{}, C)$ has a feeble  representation.  To define $\lambda$:
\begin{align*}
\lambda(x, x) &= 1'&& x\in V_+\cup V_+'   \\
\lambda (x, y')&=\x&&x\in V_+,\; y'\in V_+'   \\
\lambda(\infty, u)&=\p_u&&u\in V\\
\lambda(\infty', u')&=\q_u&&u\in V\\
\lambda(u, v)&=c_{uv}&&(u, v)\in\E,\; \rho(u, v)=c\;\in\set{\r,\sfb}\\
\lambda(u', v')&=\bar c_{uv}&&(u, v)\in\E,\; \rho(u, v)=c\\
\lambda(u, v)=\lambda(u', v')&=\g&&(u, v)\not\in\E,\;u\neq v
\end{align*}
where $\bar c$ denotes `the other colour', for $c\in\set{\r, \sfb}$, and each converse edge is labelled by the converse atom, e.g. $\lambda(u, \infty)=\cv\p_u$.
It is a routine check that this defines a consistent atomic network and that
every atom labels at least one edge.  By Lemma~\ref{lem:feeble}, $(S, E, \cv{ }, C)$ has a feeble representation.

For the converse, suppose the complex algebra over $(S, E, \cv{}, C)$ has a feeble representation, by Lemma~\ref{lem:feeble} there is a consistent,  atomic network  $(N, \lambda)$ witnessing all atoms.  We must show
that $G$ is a yes instance of the monochromatic triangle problem.   

By forbidden triple \ref{forb:y} we have that for any $x, y, z\in N$ either none or exactly two of $\set{\lambda(x, y), \lambda(x, z), \lambda(y, z)}$ equal $\x$.  It follows that  the set of pairs $\set{(x, y):x, y\in N,\;\lambda(x, y)=\x}$ forms a complete bipartite graph with nodes $N$.  So $N$ is the disjoint union $N_1\cup N_2$, say, and $\lambda(x, y)=\x$ iff either $x\in N_1,\; y\in N_2$ or $x\in N_2,\; y\in N_1$. 

Since all atoms are witnessed there are nodes $z, x\in N$ such that $\lambda(z, x)=\p_u$, without loss $z, x\in N_1$.   By forbidden triple \ref{forb:S}, for every $y\in N_1\setminus\set z$ the label $\lambda(z, y)$ belongs to $S_{\infty G}$, so it is $\p_v$ or $\q_v$ for some $v\in V$, but it cannot be $\q_v$ by forbidden triple \ref{forb:pq}.  By forbidden triples \ref{forb:indices2} and \ref{forb:gaps},
\[ \lambda(x, y) = \left\{\begin{array}{ll} 1'&\mbox{if }u=v\\  c_{uv}&\mbox{if $(u, v)\in\E$,  for some }c\in\set{\r, \sfb}\\
\g&\mbox{otherwise}
\end{array}\right.
\]
Hence the map $^*:N_1\setminus\set z\rightarrow V$ which maps $y\in N_1\setminus\set z$ to $v\in V$ iff $\lambda(z, y)=\p_v$ is a well-defined injection.  Since each atom $\p_v$ is witnessed in the network $*$ is surjective, hence a bijection from $N_1$ onto $V$.  Define a function $\rho:\E\rightarrow\set{\r, \sfb}$ by letting $\rho(u^*, v^*)=c$ if $\lambda(u, v)=c_{uv}$, for $c\in\set{\r, \sfb}$.  Since $\cv{\r_{uv}}=\r_{vu}$ and $\cv{\sfb_{uv}}=\sfb_{vu}$ this colouring function is symmetric.  By forbidden triple \ref{forb:no triangle}, $\rho$ is a valid colouring of the graph.
\end{proof}

\section{Axiomatisability}\label{sec:NFA}

\begin{theorem}\label{thm:nfa}
If $\c K$ is a class of  algebras containing all strongly representable relation algebras and contained in the class of all feebly representable non-associative algebras, then $\c K$ has no finite axiomatisation in first order logic.\end{theorem}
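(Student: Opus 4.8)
The plan is to refute finite axiomatisability by the standard ultraproduct criterion. Suppose for contradiction that $\c K$ is defined, within algebras of the signature, by a single first-order sentence $\sigma$ (any finite axiomatisation may be replaced by the conjunction of its members). It then suffices to produce a sequence $(\c A_n)_{n<\omega}$ together with a non-principal ultrafilter $\mathcal U$ on $\omega$ such that each $\c A_n\notin\c K$ while $\prod_n\c A_n/\mathcal U\in\c K$: by \Los's theorem $\prod_n\c A_n/\mathcal U\models\sigma$ iff $\{n:\c A_n\models\sigma\}\in\mathcal U$, yet the ultraproduct satisfies $\sigma$ while $\{n:\c A_n\models\sigma\}=\varnothing\notin\mathcal U$, a contradiction. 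Because $\c K$ lies between the strongly representable relation algebras and the feebly representable non-associative algebras, both memberships follow from the \emph{stronger} demands that each $\c A_n$ be \textbf{not feebly representable} (so $\c A_n\notin\c K$, as $\c K$ is contained in the feebly representable algebras) and that the ultraproduct be \textbf{strongly representable} (so it lies in $\RRA\subseteq\c K$). I take each $\c A_n$ to be a genuine associative relation algebra, so the ultraproduct is automatically a relation algebra and only its representability is at issue; these $\c A_n$ are the algebras with no feeble representation promised earlier in the paper.

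Next I would build the $\c A_n$ as finite relation algebras in the style of Monk's non-representable algebras, but tuned so that the obstruction already bites at the level of feeble representability. By Lemma~\ref{lem:feeble}, a feeble representation of a finite algebra is precisely a consistent atomic network witnessing every atom, i.e.\ a labelling of a complete graph by atoms that avoids every forbidden triple yet uses each atom at least once. Accordingly the $\c A_n$ are organised around $n$ symmetric ``colour'' atoms whose forbidden triples encode ``no monochromatic triangle,'' together with an auxiliary gadget of forbidden triples whose combined effect is that any consistent labelling witnessing all atoms must $n$-colour the edges of a complete graph, on $R_n$ or more vertices, with no monochromatic triangle, where $R_n=R(\underbrace{3,\dots,3}_{n})$ is the $n$-colour Ramsey number for triangles. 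This is impossible, since every $n$-colouring of the edges of $K_{R_n}$ contains a monochromatic triangle; hence $\c A_n$ has no feeble representation.

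For representability of the ultraproduct I would exploit that the bounds $R_n$, though finite for each $n$, are unbounded as $n\to\omega$, so the single finite obstruction responsible for each $\c A_n$ does not persist in the limit: in $\prod_n\c A_n/\mathcal U$ there are, in effect, unboundedly many colours available. I would then construct a strong representation directly on a countably infinite base, colouring the edges of the complete graph on that base so that every colour class is triangle-free and every consistent triple of the atom structure is realised—feasible precisely because infinitely many colours are at hand (for instance by colouring an edge according to a level determined by its endpoints). Reading the atoms off this colouring yields a square, complete, hence strong, representation, placing the ultraproduct in $\RRA$ and so in $\c K$; \Los's theorem then supplies the contradiction and the theorem follows.

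The main obstacle is the tension built into the hypothesis: $\c K$ is squeezed between the \emph{weakest} useful notion of representation (feeble) and the \emph{strongest} (strong). To use this I must make each finite $\c A_n$ fail the weak notion—which, since feeble representability asks only that all atoms be jointly realisable in one consistent network, is genuinely harder to obstruct than ordinary non-representability—while arranging that the limit satisfies the strong notion. The delicate engineering is the forbidden-triple gadget that converts the mild demand ``witness every atom'' into the rigid demand ``embed a monochromatic-triangle-free $n$-colouring of a complete graph on at least $R_n$ vertices,'' together with the explicit verification that the representation of the ultraproduct is square and complete. Classical Monk algebras cannot be used off the shelf precisely because they \emph{are} feebly representable, so both the novelty and the difficulty reside in forcing the collapse all the way down to the feeble level.
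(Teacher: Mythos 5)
Your overall strategy coincides with the paper's: take an ultraproduct of finite Monk-style algebras, each of which fails feeble representability by Ramsey's theorem, arrange that the ultraproduct is a strongly representable relation algebra, and conclude by \Los' theorem. You also correctly observe that classical Monk algebras cannot be used off the shelf because they are feebly representable. However, at both of the points where you yourself locate the difficulty, the proposal has a genuine gap. The ``auxiliary gadget'' that converts ``witness every atom'' into ``build a consistent network on at least $k(n)$ vertices'' is never constructed, and it is the heart of the lower half of the argument. The paper's device is to split one colour class into $k(n)(k(n)-1)/2$ distinct self-converse atoms $a_0^k$ and to forbid \emph{every} triple drawn from this family (so that the family behaves as a single colour), alongside $n-1$ further colours $a_i$ with only the monochromatic triples $(a_i,a_i,a_i)$ forbidden. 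Witnessing each $a_0^k$ then forces that many distinct edges, hence at least $k(n)$ vertices, every edge of the resulting complete graph carries one of $n$ effective colours, and Ramsey's theorem yields the contradiction. Without some such construction the claim that $\c A_n$ is not feebly representable remains unproved.

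The second gap is more serious: your argument for strong representability of the ultraproduct would fail as stated. Over a non-principal ultrafilter the ultraproduct has $2^{\aleph_0}$ atoms (the classes of sequences of atoms), and a strong representation must realise every atom, so no countably infinite base can carry one. Moreover, a strong representation requires each consistent triple $(a,b,c)$ to be witnessed \emph{at every edge} labelled $c$ with $c\leq a\comp b$, not merely realised once somewhere; ``every consistent triple of the atom structure is realised'' is the qualitative condition, and the sketched ``level'' colouring gives no reason to expect this saturation. The paper sidesteps any explicit construction: every finitely generated subalgebra of the complex algebra of the ultraproduct atom structure contains a flexible atom (any element above $a_i+a_j$ for distinct $i,j>0$), relation algebras with a flexible atom are representable by the Comer/Maddux theorem, and since $\RRA$ is a variety it is closed under directed unions and under subalgebras, which covers both the complex algebra and the ultraproduct sitting inside it. You would need either this route or a genuine step-by-step or game-theoretic construction of a representation on an uncountable base.
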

\begin{proof}
An $n$-colouring of a set $S$ is a symmetric function $\rho$ mapping pairs of distinct elements of $S$ to a set of $n$ colours, avoiding monochromatic triangles.
Let $n\geq 3$, let $k(n)$ be the smallest integer such that there is no  $n$-colouring on a set with $k(n)$ elements (the Ramsey number) and let $\alpha_n$ be the atom structure with atoms
\[ 
\left\{1'\right\}\cup\left\{a_0^k\colon k<\frac{k(n)(k(n)-1)}2\right\}\cup\{a_i:0<i<n\}
\]
All atoms are self-converse.  All triples of atoms are consistent except those
of the form $(1', x, y)$ for $x\neq y$ (and Peircean transforms of these) and
triples of atoms with the same subscript, i.e. $(a_i, a_i, a_i)$ for $0<i<n$ or
$(a_0^k, a_0^{k'}, a_0^{k^*})$, for $k, k', k^*<\frac{k(n)(k(n)-1)}2$.   Let $\c A_n$ be the complex algebra of $\alpha_n$ (sometimes called a \emph{Monk Algebra}, see
\cite[Definition~15.2]{HH:book}).  Then $\c A_n$ has no feeble
representation because any atomic network witnessing all 
atoms has at least $\frac{k(n)(k(n)-1)}2$ distinct edges (one for each atom
$a_0^k$) hence $k(n)$ distinct points,  but no such network can be consistent
since there is no $n$-colouring of a set with this many elements. 

Now consider a non-principal ultraproduct $\alpha=\prod_{n\in\omega}\alpha_n/U$ of the
$\alpha_n$, where $U$ is a non-principal ultrafilter over $\omega$. The atoms of
$\alpha$ are (up to isomorphism) 
$\{1'\}\cup\{a_0^k\colon k<\kappa\}\cup\{a_i:0<i<\eta\}$ 
where $\kappa, \eta$ are infinite ordinals, $\kappa$ is the non-principal
ultraproduct of the $k(n)$s and $\eta$ is the non-principal ultraproduct of the
$n$s.  All atoms are self converse and the consistent triples of atoms are as in
the definition of $\alpha_n$.  We claim that the complex algebra $\Cm(\alpha)$
is a strongly representable relation algebra.  Since $\RRA$ is a variety, it
suffices to prove that every finitely generated subalgebra of $\Cm(\alpha)$ is
representable. For this, observe that an element $a\geq a_i+a_j$, for any distinct $i, j>0$ is \emph{flexible} in that $a;x \geq 1-1'$ for any atom $x\neq 1'$.  So a finite subset $S$ of the elements of $\c A$ generates a finite boolean subalgebra of the boolean part of $\c A$,  this finite boolean subalgebra generates (using $1', \; \cv{}$ and $;$) a finite subalgebra of $\c A$ and this finite subalgebra contains at least one flexible atom.  It is known that every relation algebra with a
flexible atom is representable \cite{Comer:comb84} or \cite[Theorem~5.19]{Ma82},
this proves the claim.   
(For a proof in a more complicated but similar case, see
\cite[Theorem~24, Proposition~26, Theorem~27]{HH:raca2}.)  Observe that the ultraproduct $\Pi_{n\in\omega}\c A_n/U$ is a subalgebra of the complex algebra of $\alpha$, so it is also strongly representable. Thus,  none of the algebras $\c A_n$ has a feeble representation so $\c A_n\not\in\c K$ (for $n<\omega$) but an ultraproduct of them has a strong representation and is in $\c K$.
 By \Los' Theorem, $\c K$ cannot be defined by finitely many axioms.
\end{proof}
By this theorem, the following classes  cannot be defined by finitely many axioms: the class of non-associative algebras with feeble representations; with qualitative representations; with strong representations.

\section{Conclusion}
Let us review some of the advantages of qualitative representations.
For many applications in knowledge representation, it is natural to express that a certain relation \emph{may} be decomposed in a certain way without insisting that such a decomposition must always exist, for example if one asserts that $y$ occurs strictly later than $x$, it might be the case that there is a $z$ occurring  in between, but not necessarily.  In order to model real world  applications using relation-like algebras, it is often necessary to consider a type of representation  for our algebras that would be ruled out if we were to restrict ourselves to strong representations.  
 For relation algebras such as RCC8 where we may wish to consider disconnected regions and regions with holes it has been known for some time that  strong relation algebra representations are problematic.   So the fact that qualitative representations include along with strong representations a very wide class of different representations, significantly extends our ability to model various situations.

Moreover, if a finite algebra has a qualitative representation then it has a qualitative representation on a finite base.  This means, in general, that the network satisfaction problem is much easier for qualitative representations than the corresponding problem for strong representations, although for certain well-known relation algebras the two versions of the problem turn out to be equivalent.  We have seen that several representation problems become computationally much easier in the context of qualitative representations compared to the corresponding strong representation problem.

In a subsequent paper we intend to investigate the universal algebra of $\QRA$ in more depth, by  providing a recursive (but necessariliy infinite) axiomatisation of it, by considering \emph{complete} qualitative representations and by showing that $\QRA$ generates a \emph{discriminator} variety.


\section*{Acknowledgement} 
The authors wish to thank Dr.~Todd Niven for his essential input in the early development of this work.
\bibliographystyle{alpha}
\newcommand{\etalchar}[1]{$^{#1}$}

\end{document}